\documentclass{article}

\usepackage{microtype}
\usepackage{graphicx}
\usepackage{subcaption}
\usepackage{booktabs} 

\usepackage[table]{xcolor} 
\usepackage{colortbl} 

\usepackage{hyperref}

\usepackage[preprint]{icml2026}

\usepackage{fancyhdr}
\fancypagestyle{plain}{%
  \fancyhf{} 
  \fancyfoot[C]{Preprint. Under review.} 
  \renewcommand{\headrulewidth}{0pt} 
  \renewcommand{\footrulewidth}{0pt} 
}

\usepackage{multirow}
\usepackage{siunitx}
\usepackage{amsmath}
\usepackage{amssymb}
\usepackage{mathtools}
\usepackage{amsthm}
\usepackage{pifont}

\definecolor{rangray}{gray}{0.9}

\newcommand{\cmark}{\ding{51}} 
\newcommand{\xmark}{\ding{55}} 

\usepackage[capitalize,noabbrev]{cleveref}

\theoremstyle{plain}
\newtheorem{theorem}{Theorem}[section]
\newtheorem{proposition}[theorem]{Proposition}
\newtheorem{lemma}[theorem]{Lemma}
\newtheorem{corollary}[theorem]{Corollary}
\theoremstyle{definition}

\newtheorem{assumption}[theorem]{Assumption}
\theoremstyle{remark}

\usepackage[most]{tcolorbox}
\newcounter{mythmcount}
\newtcolorbox{thmbox}[1]{
    colback=white, colframe=black, boxrule=0.5pt, sharp corners,
    left=2pt, right=2pt, top=2pt, bottom=2pt,
    before upper={\refstepcounter{mythmcount}\textbf{Theorem \themythmcount~(#1).~~}}
}

\usepackage{tikz}
\usepackage{pgfplots}
\pgfplotsset{compat=1.18}
\usetikzlibrary{arrows.meta, positioning, calc, backgrounds, patterns, intersections}

\definecolor{rangreen}{RGB}{39, 174, 96} 
\definecolor{kanred}{RGB}{231, 76, 60}   
\definecolor{cnnblue}{RGB}{52, 152, 219} 
\definecolor{mlpgray}{RGB}{149, 165, 166} 

\newtcolorbox{propbox}[1]{
    colback=white, colframe=black, boxrule=0.5pt, sharp corners,
    left=2pt, right=2pt, top=2pt, bottom=2pt,
    before upper={\refstepcounter{mythmcount}\textbf{Proposition \themythmcount~(#1).~~}}
}
\usepackage[textsize=tiny]{todonotes}

\icmltitlerunning{Rational ANOVA Networks}

\begin{document}

\twocolumn[
  \icmltitle{Rational ANOVA Networks}


  \begin{icmlauthorlist}
    \icmlauthor{Jusheng Zhang}{sysu}
    \icmlauthor{Ningyuan Liu}{sysu}
    \icmlauthor{Qinhan Lyu}{sysu}
    \icmlauthor{Jing Yang}{sysu}
    \icmlauthor{Keze Wang}{sysu}
  \end{icmlauthorlist}

  \icmlaffiliation{sysu}{Sun Yat-sen University, China}

  \icmlcorrespondingauthor{Jusheng Zhang}{zhangjs@mail.sysu.edu.cn} 

  \icmlkeywords{Machine Learning, ICML, Rational ANOVA Networks}

  \vskip 0.3in
]

\printAffiliationsAndNotice{} 

\begin{abstract}
Deep neural networks typically treat nonlinearities as fixed primitives (e.g., ReLU), limiting both interpretability and the granularity of control over the induced function class. While recent additive models (like KANs) attempt to address this using splines, they often suffer from computational inefficiency and boundary instability. We propose the Rational-ANOVA Network (RAN), a foundational architecture grounded in functional ANOVA decomposition and Padé-style rational approximation. RAN models $f(x)$ as a composition of main effects and sparse pairwise interactions, where each component is parameterized by a stable, learnable rational unit. Crucially, we enforce a strictly positive denominator, which avoids poles and numerical instability while capturing sharp transitions and near-singular behaviors more efficiently than polynomial bases. This ANOVA structure provides an explicit low-order interaction bias for data efficiency and interpretability, while the rational parameterization significantly improves extrapolation. Across controlled function benchmarks and vision classification tasks (e.g., CIFAR-10) under matched parameter and compute budgets, RAN matches or surpasses parameter-matched MLPs and learnable-activation baselines, with better stability and throughput.Code is available at \url{https://github.com/jushengzhang/Rational-ANOVA-Networks.git}.

\end{abstract}

\begin{figure*}[t]
    \centering
    \includegraphics[width=0.88\linewidth]{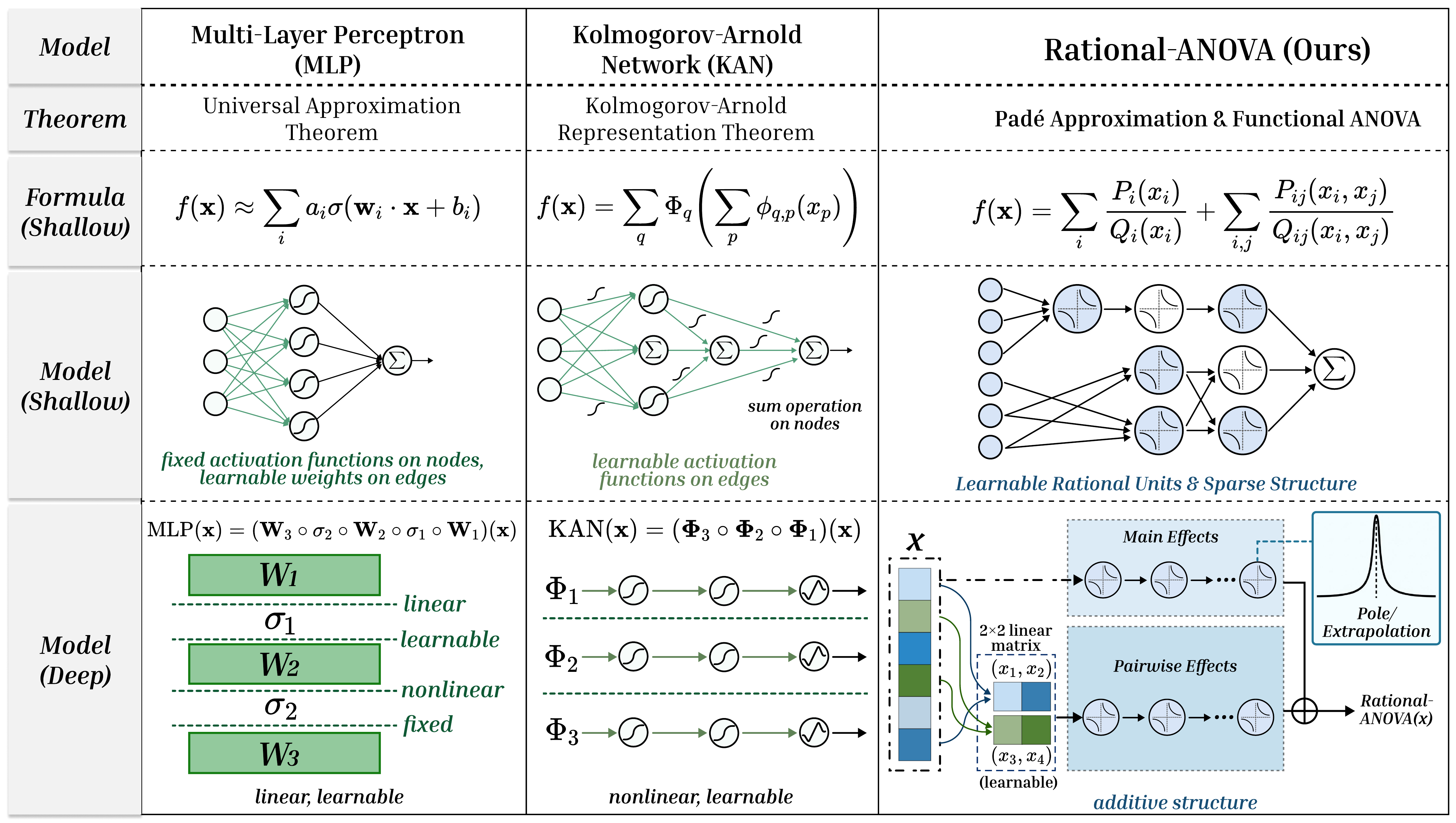}
\caption{\textbf{Comparison of RAN with MLPs and KANs.}
MLPs use fixed activations; KANs learn edge splines.
RAN employs \textbf{learnable rational units} in a \textbf{Functional ANOVA} topology, decomposing $f$ into main effects ($P_i/Q_i$) and sparse interactions ($P_{ij}/Q_{ij}$).}

    \label{fig:comparison}
\end{figure*}

\section{Introduction}
\label{sec:intro}
Deep neural networks have become the default function approximators across domains\cite{Schmidhuber_2015,devore2020neuralnetworkapproximation,lin2021surveytransformers,JMLR:v24:21-1524}, from vision and language to scientific computing. Yet a core ingredient of this success, i.e., nonlinearity, is still treated as a surprisingly fixed and coarse design choice in most architectures. In practice, widely used backbones rely on low-parameter activations (e.g., ReLU~\cite{agarap2018deep, xu2020reluplex}, GELU~\cite{DBLP:journals/corr/HendrycksG16}, SiLU~\cite{DBLP:journals/corr/ElfwingUD17}) whose role is largely to inject generic curvature, while the burden of expressivity is delegated to scaling width and depth. This design is robust and efficient but exposes a persistent limitation: when the target mapping involves sharp transitions, near-singular regimes, or long-tail behaviors, fixed activations often require substantial over-parameterization to fit the shape~\cite{glorot2010understanding, klambauer2017self}. Furthermore, deep composition can amplify \textbf{activation outliers (e.g., heavy-tailed feature excursions)} into unstable optimization dynamics.

A natural response is to make the nonlinearity itself learnable. Recent progress, exemplified by spline-parameterized networks such as Kolmogorov--Arnold Networks (KANs)~\cite{liu2024kan,liu2024kan20kolmogorovarnoldnetworks}, \textbf{suggests} that learnable nonlinearities \textbf{can} improve interpretability and alter the accuracy--efficiency trade-off. However, turning nonlinearity into a high-capacity object raises a practical challenge: deep networks are not merely function classes but carefully tuned parameterizations whose stability depends on activation statistics and residual pathways. A highly expressive nonlinearity may inadvertently degrade conditioning or introduce numerical fragility when composed repeatedly in depth. This tension is especially salient if we seek a base architecture, i.e., a drop-in primitive that can be trained deeply and compared fairly against MLPs and KANs under matched budgets.

In this work, we propose \textbf{Rational-ANOVA Networks (RAN)}, a base architecture that makes nonlinearity learnable while maintaining stable and controllable deep training. RAN is motivated by two classical ideas: Padé-style rational approximation~\cite{DBLP:journals/corr/abs-1907-06732,DBLP:journals/corr/abs-2004-01902, palm2018recurrentrelationalnetworks} and functional ANOVA decomposition. Instead of relying on a fixed pointwise activation, RAN models a multivariate function via a low-order additive structure:
\begin{equation}
    f(\mathbf{x}) \approx \sum_{i=1}^{d} r_i(x_i) + \sum_{(i,j)\in\mathcal{S}} r_{ij}(x_i, x_j),
    \label{eq:ran_formulation}
\end{equation}
where the first term captures main effects and the second term captures sparse pairwise interactions over a controlled set $\mathcal{S}$. \textbf{This set $\mathcal{S}$ can be chosen to match a target compute or parameter budget, yielding a controllable interaction topology.} As shown in Figure~\ref{fig:comparison}, each component $r_i$ and $r_{ij}$ is parameterized by a learnable rational unit---a quotient of low-degree polynomials. Figure~\ref{fig:comparison} contrasts this design with MLPs and KANs: while MLPs place fixed nonlinearities on nodes and KANs place learnable splines on edges, RAN employs learnable rational units within an explicit low-order interaction topology.
A key feature of RAN is its deep compatibility. To ensure numerical stability, we parameterize denominators to be strictly positive (specifically via $1+\text{softplus}(\cdot)$), effectively avoiding poles while retaining the expressive ``Padé-like'' modeling capability~\cite{trefethen2019approximation, telgarsky2017neural}. Furthermore, we employ residual-style gating (shown in Figure~\ref{fig:architecture}), allowing each rational unit to initialize near an identity map ($y \approx x$) and gradually increase its effective nonlinearity during training. This prevents the early-stage instability often caused by aggressive curvature in rational functions.

We evaluate RAN on controlled function benchmarks and physics-inspired problems \textbf{under matched parameter and compute budgets against MLPs and KANs}. Our results show that rational parameterization and sparse ANOVA structure are complementary: rational units handle non-smooth or near-singular regimes, while sparse interactions balance expressivity and generalization.
\begin{figure*}[t]
    \centering
    \includegraphics[width=0.95\linewidth]{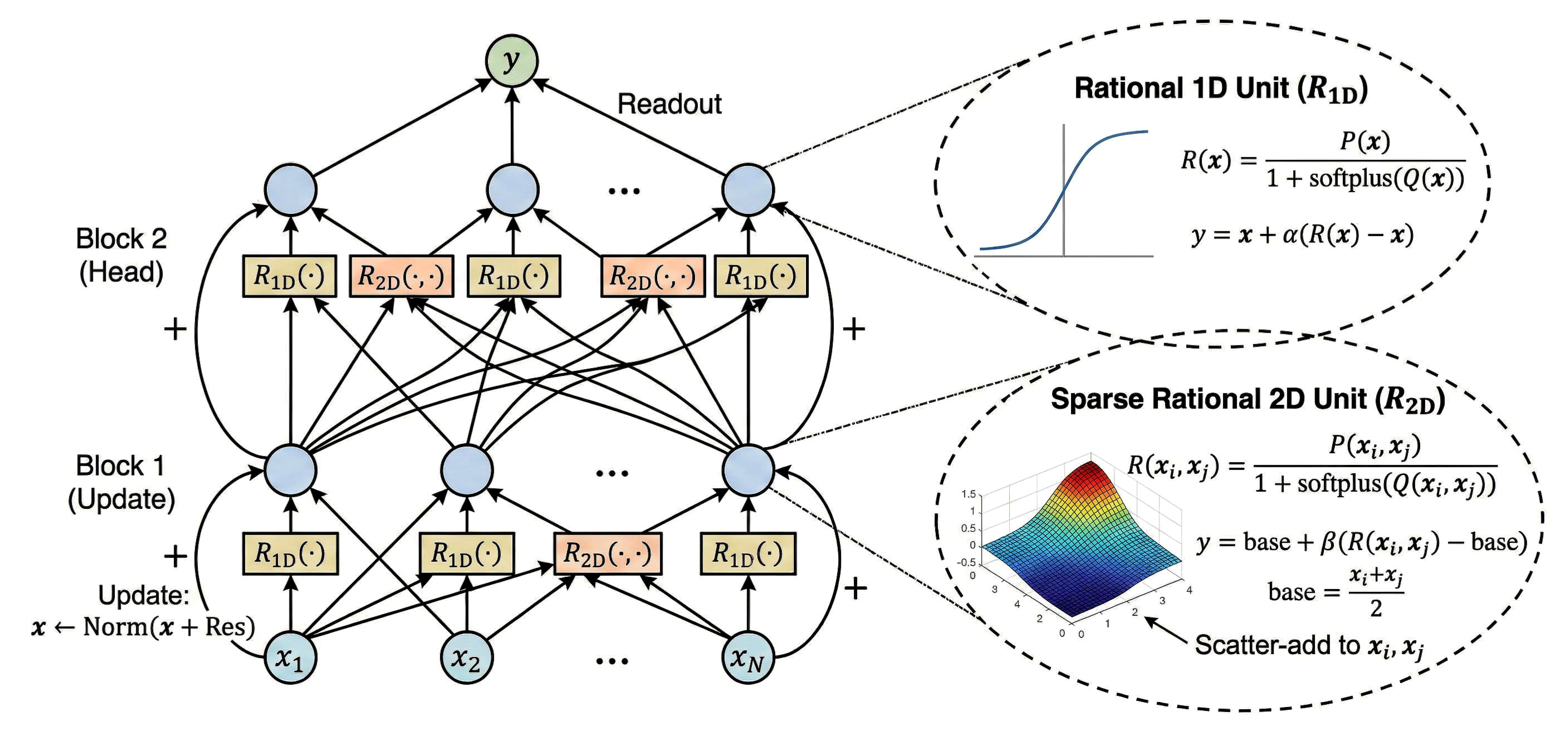}
\caption{\textbf{Deep Rational-ANOVA Network (RAN) architecture.}
\textbf{Left:} A deep backbone of stacked residual blocks; each block performs sparse pairwise message passing (interactions) then node-wise updates.
\textbf{Right:} Learnable rational units. \textbf{$R_{1\text{D}}$} and \textbf{$R_{2\text{D}}$} use residual gating $y=x+\alpha(R(x)-x)$ for identity initialization. Denominators are positive ($1+\text{softplus}(\cdot)$) for stability and pole-free composition.
}
\label{fig:architecture}

\end{figure*}

\section{Rational-ANOVA Networks}
\label{sec:related_method}

\paragraph{Setup and notation.}
Let $\mathbf{x}\in\mathbb{R}^d$ denote the input and $f_\theta(\mathbf{x})$ the model output (logits for classification or a scalar for regression)~\cite{rumelhart1986learning}. We design Rational-ANOVA Networks (RAN) as a \emph{base network primitive} with two design goals: (i) to enable \emph{strictly fair} comparisons with parameter-matched MLPs and KANs, and (ii) to serve as a drop-in replacement for FFNs in pretrained backbones by modifying only the nonlinearity parameterization while preserving linear projections and activation statistics.

\begin{figure*}[t]
    \centering
    \begin{tikzpicture}[
        font=\sffamily\small,
        >=Stealth,
        myblue/.style={blue!60!cyan, thick},  
        mygray/.style={gray!40, thick},       
        myfill/.style={fill=gray!5},          
        myred/.style={red!80!black, ultra thick, ->}, 
        axisstyle/.style={->, black!80, thin},
        labelstyle/.style={font=\footnotesize\bfseries}
    ]

    \coordinate (P1) at (0,0);
    \coordinate (P2) at (5.5,0);
    \coordinate (P3) at (11,0);

    \begin{scope}[shift={(P1)}]
        \node[labelstyle, anchor=south] at (2.2, 3.2) {(a) MLP: Dense Entanglement};
        
        \draw[axisstyle] (0,0) -- (4.5,0); 
        \draw[axisstyle] (0,0) -- (0,3);   
        
        \node[gray, scale=0.8] at (2.25, -0.4) {Input Space (Slice)};
        
        \draw[mygray, name path=old1] plot[smooth, tension=0.7] coordinates {
            (0.2, 0.5) (1.0, 1.2) (2.0, 0.6) (3.0, 1.5) (4.2, 0.5)
        };
        
        \coordinate (Xu1) at (3.0, 1.5);
        \draw[myred] (3.0, 1.5) -- (3.0, 2.5) node[midway, right, scale=0.8] {$\nabla \mathcal{L}$};
        \node[below, red!80!black] at (3.0, 0) {$x_u$};
        \fill[red!80!black] (3.0, 1.5) circle (2pt);
        
        \draw[myblue, name path=new1] plot[smooth, tension=0.7] coordinates {
            (0.2, 0.8) (1.0, 1.6) (2.0, 1.0) (3.0, 2.3) (4.2, 0.7)
        };
        
        \coordinate (Xo1) at (1.0, 0);
        \node[below, black] at (Xo1) {$x_o$};
        \draw[->, dashed, blue] (1.0, 1.2) -- (1.0, 1.6);
        \node[scale=0.7, blue!80!black, align=center] at (1.0, 2.0) {Unintended\\Shift};
        
        \node[align=center, scale=0.75, anchor=north] at (2.25, -0.7) {
            $\Delta f(x_o) \propto K_{\text{dense}}(x_o, x_u)$ \\
            Non-local updates (Global mixing)
        };
    \end{scope}

    \begin{scope}[shift={(P2)}]
        \node[labelstyle, anchor=south] at (2.2, 3.2) {(b) RAN: ANOVA Locality};
        
        \draw[axisstyle] (0,0) -- (4.5,0);
        \draw[axisstyle] (0,0) -- (0,3);
        \node[gray, scale=0.8] at (2.25, -0.4) {Input Space (Slice)};

        \draw[mygray] plot[smooth, tension=0.7] coordinates {
            (0.2, 0.5) (1.0, 1.2) (2.0, 0.6) (3.0, 1.5) (4.2, 0.5)
        };
        
        \draw[myred] (3.0, 1.5) -- (3.0, 2.5) node[midway, right, scale=0.8] {$\nabla \mathcal{L}$};
        \node[below, red!80!black] at (3.0, 0) {$x_u$};
        \fill[red!80!black] (3.0, 1.5) circle (2pt);
        
        \draw[myblue] plot[smooth, tension=0.7] coordinates {
            (0.2, 0.5) (1.0, 1.2) (2.0, 0.7) (3.0, 2.3) (4.2, 0.5)
        };
        
        \coordinate (Xo2) at (1.0, 0);
        \node[below, black] at (Xo2) {$x_o$};
        \node[scale=0.7, green!50!black, align=center] at (1.0, 1.5) {Stable\\(Zero Influence)};
        \draw[green!50!black, thick, shorten >=2pt, shorten <=2pt] (1.0, 1.25) circle (2pt);
        
        \node[align=center, scale=0.75, anchor=north] at (2.25, -0.7) {
            Influence restricted by topology \\
            $\sum K^{\text{main}} + \sum K^{\text{pair}}_{\text{sparse}}$
        };
    \end{scope}

    \begin{scope}[shift={(P3)}]
        \node[labelstyle, anchor=south] at (2.2, 3.2) {(c) Rational Stability};
        
        \draw[axisstyle] (0,0) -- (4.5,0);
        \draw[axisstyle] (0,1.2) -- (0,3); 
        \node[gray, scale=0.8] at (2.25, -0.4) {Pre-activation $x$};
        
        \draw[gray, dashed, thick] (0.2, 0.5) -- (2.0, 0.5) -- (2.2, 2.5) -- (4.2, 2.5);
        \node[gray, scale=0.7, anchor=west] at (3.5, 2.6) {Target};

        \draw[myred] (2.1, 0.8) -- (2.1, 2.2) node[midway, left, scale=0.7] {Squeezing};
        
        \draw[orange!80!black, thick, dotted] plot[smooth, tension=1] coordinates {
            (0.2, 0.6) (1.5, 0.4) (2.1, 1.5) (2.7, 2.8) (3.5, 2.2) (4.2, 2.7)
        };
        \node[orange!80!black, scale=0.7] at (3.8, 1.8) {Poly/Spline};
        
        \draw[myblue, thick] plot[smooth, tension=0.8] coordinates {
            (0.2, 0.55) (1.8, 0.55) (2.4, 2.45) (4.2, 2.45)
        };
        \node[blue!80!black, scale=0.7] at (1.2, 2.0) {\textbf{Rational}};
        
        \draw[->, black, thin] (2.8, 1.0) -- (2.4, 1.5);
        \node[scale=0.65, align=left, fill=white, inner sep=1pt, draw=gray!20] at (3.4, 0.8) {
            Bounded deriv.\\
            $d'(x) \approx 0$ (Eq. 18)
        };
        
        \node[align=center, scale=0.75, anchor=north] at (2.25, -0.7) {
            Sharp transitions w/o poles \\
            Prevents gradient explosion
        };
    \end{scope}

    \end{tikzpicture}
    \caption{\textbf{Learning Dynamics Comparison.} Similar to how RLHF/DPO dynamics affect probability mass, we visualize how structural choices affect function updates $\Delta f$. 
    \textbf{(a) MLP Entanglement:} An update at $x_u$ (red arrow) causes uncontrolled shifts at distant $x_o$ (dashed arrow) due to dense kernel mixing. 
    \textbf{(b) RAN Locality (Ours):} The ANOVA structure (Eq.~\ref{eq:kernel_struct}) disentangles interactions; updates at $x_u$ leave uncorrelated regions $x_o$ stable.
    \textbf{(c) Rational Stability:} Under strong ``squeezing'' gradients (steep transitions), polynomials oscillate (Runge's phenomenon), while RAN's rational units (Eq.~\ref{eq:rational_jacobian_bound}) fit smoothly due to denominator-controlled derivatives.}
    \label{fig:new_dynamics}
\end{figure*}
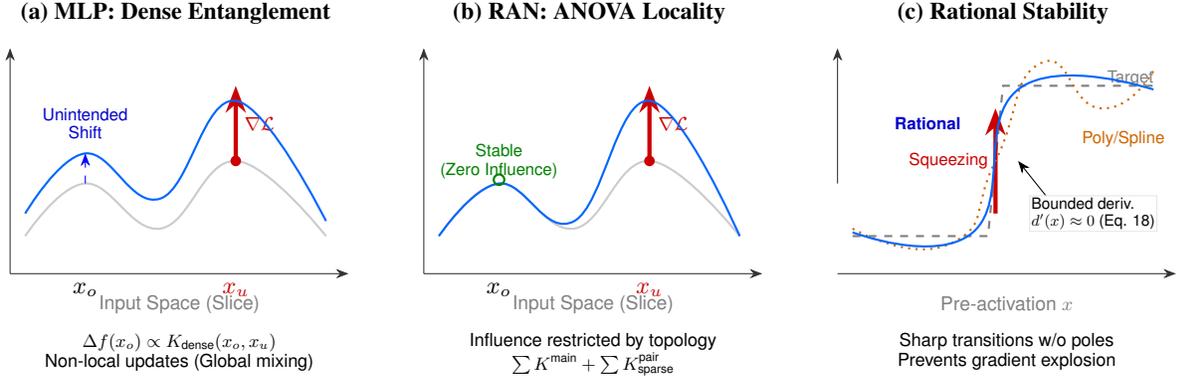
\subsection{ANOVA-Induced Architecture}
\label{sec:anova_arch}

\paragraph{Low-order functional decomposition.}
RAN parameterizes multivariate mappings via an explicit low-order additive structure(conceptually contrasted with MLPs and KANs in Figure~\ref{fig:comparison}):
\begin{equation}
\small
\begin{split}
    f_\theta(\mathbf{x})
    =
    &\underbrace{\sum_{i=1}^d r_i(x_i)}_{\text{main effects}}
    \;+\;
    \underbrace{\sum_{(i,j)\in\mathcal{S}} r_{ij}(x_i,x_j)}_{\text{sparse pairwise effects}} \\
    &\text{(optionally followed by a linear readout).}
\end{split}
\label{eq:ran_core}
\end{equation}
Here, $\mathcal{S}\subseteq \{(i,j):1\le i<j\le d\}$ is a \emph{controlled interaction set}. This design makes the interaction topology \emph{explicit and budgetable}: increasing $|\mathcal{S}|$ adds second-order capacity without the combinatorial explosion of dense pairwise grids. Beyond efficiency, the sparsity of $\mathcal{S}$ also induces a structured influence geometry (as analyzed in Sec.~\ref{sec:dynamics}), allowing cross-sample coupling to be explicitly analyzed and controlled through the interaction topology.

\paragraph{Instantiation for supervised learning.}
For $C$-class classification, we construct a feature vector by concatenating all main and pairwise outputs:
\begin{equation}
\small 
\begin{split}
    \phi_\theta(\mathbf{x})
    = \big[ & r_1(x_1),\dots,r_d(x_d), \\
            & r_{i_1j_1}(x_{i_1},x_{j_1}),\dots,r_{i_Kj_K}(x_{i_K},x_{j_K}) \big] \in\mathbb{R}^{d+K},
\end{split}
\end{equation}
where $K=|\mathcal{S}|$. Logits are produced by a linear head $z(\mathbf{x}) = W\phi_\theta(\mathbf{x}) + b$. This yields a robust baseline where \emph{all nonlinearity lives inside learnable rational units}, while global mixing remains linear (hence stable, interpretable, and compatible with pretrained linear projections if inserted into FFNs).
\textbf{Choosing the interaction set $\mathcal{S}$.}
To ensure fair comparisons and avoid task-specific engineering, our default setting uses a \emph{fixed random} $\mathcal{S}$ sampled once with a global seed and reused across runs. This isolates the inductive bias of rational parameterization combined with sparse low-order structure. We evaluate data-driven choices (e.g., correlation-based or gradient-based selection) only in ablations~(see Appendix~\ref{sec:ablation}).

\subsection{Learnable Rational Units}
\label{sec:rational_units}

\paragraph{1D rational unit (main effects).}
Each main-effect function $r_i:\mathbb{R}\to\mathbb{R}$ is parameterized as a Pad\'e-style rational map:
\begin{equation}
\small 
\begin{split}
    &\tilde r_i(x) = \frac{p_i(x)}{d_i(x)}, \qquad p_i(x)=\sum_{a=0}^{m} \alpha_{ia} x^a, \\
    &d_i(x)=1+\operatorname{softplus}\!\Big(\sum_{b=1}^{n}\beta_{ib}x^b\Big)+\varepsilon,
\end{split}
\label{eq:rational_1d}
\end{equation}
where $(m,n)$ are degrees and $\varepsilon>0$. The constraint $d_i(x)\ge 1+\varepsilon$ prevents poles while retaining the expressive quotient form~(see Appendix~\ref{sec:theoretical_analysis} for regularity proofs). In practice, low degrees are sufficient to capture sharp transitions and saturation effects that fixed activations often model only via increased width or depth.

\paragraph{2D rational unit (pairwise effects).}
Each pairwise function $r_{ij}:\mathbb{R}^2\to\mathbb{R}$ uses a low-degree bivariate rational form:
\begin{equation}
\small 
\begin{split}
    &\tilde r_{ij}(x,y) = \frac{p_{ij}(x,y)}{d_{ij}(x,y)}, \quad p_{ij}(x,y)=\sum_{t=1}^{T}\gamma_{ij,t}\,\psi_t(x,y), \\
    &d_{ij}(x,y)=1+\operatorname{softplus}\!\Big(\sum_{s=1}^{S}\delta_{ij,s}\,\varphi_s(x,y)\Big)+\varepsilon,
\end{split}
\label{eq:rational_2d}
\end{equation}
where $\{\psi_t\}$ and $\{\varphi_s\}$ are simple basis monomials (e.g., $1,x,y,x^2,xy,y^2$). Restricting the basis to low order is crucial for deep composition: it controls curvature and reduces optimization instability, while still providing a richer local shaping mechanism than fixed node-wise activations.

\subsection{Deep Compatibility via Residual Gating}
\label{sec:deep_compat}

\paragraph{Residual-style gating (identity-safe).}
Directly composing expressive rational maps can be unstable if aggressive curvature is activated too early. We therefore wrap each rational map with a residual gate(as detailed in Figure~\ref{fig:architecture}):
\begin{equation}
\small 
    r(x) \;=\; x \;+\; \alpha\cdot\big(\tilde r(x)-x\big),
    \label{eq:gating_1d}
\end{equation}
and similarly for $r_{ij}$ with gate $\alpha_{ij}$. Gates are initialized so that each unit starts near-identity ($r(x)\approx x$), enabling the model to gradually ``turn on'' rational nonlinearity during training.

\paragraph{Stable early-stage Jacobian.}
The Jacobian of the gated unit is
\begin{equation}
\small 
    \frac{\partial r}{\partial x}
    =(1-\alpha) + \alpha \frac{\partial \tilde r}{\partial x}.
    \label{eq:gating_jac}
\end{equation}
A small initial $\alpha$ keeps the effective sensitivity close to $1$. This prevents early training dynamics from being dominated by activation outliers, and ensures deep optimization behavior is comparable to stable residual architectures~(proven in Appendix~\ref{sec:deep_stability}).

\subsection{RAN as a Drop-in FFN Replacement}
\label{sec:ran_ffn}

\paragraph{Replacing node-wise activations.}
A standard Transformer/ViT FFN is $\mathrm{FFN}(h)=W_2\,\sigma(W_1 h)+b$ with $\sigma$ (e.g., GELU). Our minimal drop-in variant replaces $\sigma$ with feature-wise 1D RAN units:
\begin{equation}
\small 
    \mathrm{RAN\text{-}FFN}(h)
    =
    W_2\, r\!\left(W_1 h\right)+b,
    \label{eq:ran_ffn}
\end{equation}
where $r(\cdot)$ is the vectorized collection of gated rational units. This preserves the linear projections (and most pretrained weights) and modifies only the nonlinearity parameterization, while the identity-safe initialization avoids disrupting pretrained activation statistics.

\paragraph{Optional pairwise augmentation.}
If explicit second-order structure is desired inside FFNs, we can augment the hidden representation with sparse pairwise rational features over selected channels:
\begin{equation}
\small 
\begin{split}
    \mathrm{RAN\text{-}FFN^+}(h)
    = W_2\Big(\big[ & r(W_1h), \\
    & \{r_{ij}((W_1h)_i,(W_1h)_j)\}_{(i,j)\in\mathcal{S}_h}\big]\Big)+b.
\end{split}
\end{equation}
While this formulation enhances expressivity by capturing internal feature correlations, it incurs non-trivial computational overhead. Therefore, we position $\mathrm{RAN\text{-}FFN^+}$ primarily as an analytical tool in our ablation studies to investigate the value of explicit second-order information, retaining the 1D variant as the default for efficiency-critical comparisons.
\subsection{Parameter Budgeting and Implementation Details}
\label{sec:complexity}

\paragraph{Budget matching.}
RAN supports exact budget matching with baselines. A 1D unit uses $(m+1)$ numerator coefficients, $n$ denominator coefficients, and one gate parameter, totaling $\approx m+n+2$ parameters per feature. A 2D unit contributes $\approx T+S+1$ parameters per selected pair. Excluding the linear head:
\begin{equation}
    \mathrm{Params} \approx d\big(m+n+2\big) \;+\; |\mathcal{S}|\,(T+S+1).
    \label{eq:param_count}
\end{equation}
This explicit accounting enables strictly matched comparisons with MLPs and KANs.
\textbf{Stability-oriented implementation.}
To ensure robust deep training, we employ the following stabilizers:
\textbf{Positive Denominators:} As defined in Eq.~\eqref{eq:rational_1d}, we enforce $d(\cdot)\ge 1+\varepsilon$ via softplus to strictly avoid poles.
\textbf{Initialization:} We initialize $\tilde r(x)$ near identity by setting $p(x)\approx x$ and $d(x)\approx 1$ (coefficients near zero), and initialize gates $\alpha$ near zero.
\textbf{Optimization:} We optionally use a smaller learning rate for denominator/gate parameters and mild weight decay on denominator coefficients~(detailed in Appendix~\ref{sec:reproducibility}). These stabilizers do not change the hypothesis class but significantly improve numerical stability in deep networks~(see analysis in Appendix~\ref{sec:theoretical_analysis}).

\section{Learning Dynamics of Rational-ANOVA Networks}
\label{sec:dynamics}

\paragraph{Learning dynamics.}
The term ``learning dynamics'' refers to how training updates affect model behavior. We study a concrete \emph{per-step} notion: \emph{how a single gradient update on one training example changes the model's prediction on another input}.
Consider one SGD step at iteration $t$ on $(x_u,y_u)$ with learning rate $\eta$:
\begin{equation}
\small 
\begin{split}
    \Delta\theta_t &\triangleq \theta_{t+1}-\theta_t = -\eta \nabla_\theta \mathcal{L}\!\left(f_{\theta_t}(x_u),y_u\right), \\
    \Delta f_t(x_o) &\triangleq f_{\theta_{t+1}}(x_o)-f_{\theta_t}(x_o).
\end{split}
\label{eq:gd_def}
\end{equation}
The central question is:
\begin{quote}
\emph{After one SGD update on $(x_u,y_u)$, how does the prediction on a different input $x_o$ change?}
\end{quote}

\paragraph{A first-order influence formula.}
A Taylor expansion yields a model-agnostic first-order approximation:
\begin{equation}
\small 
\begin{split}
    \Delta f_t(x_o)
    \approx
    -\eta \,\nabla_\theta f_{\theta_t}(x_o)^\top \nabla_\theta \mathcal{L}\!\left(f_{\theta_t}(x_u),y_u\right).
\end{split}
\label{eq:first_order_generic}
\end{equation}
Thus, the geometry of influence is governed by gradient alignment between the query point $x_o$ and the update point $x_u$.

\paragraph{Classification: eNTK-style decomposition.}
For multi-class classification with logits $z=h_\theta(x)$ and $\pi=\mathrm{Softmax}(z)$, we track the vector of log-probabilities. A first-order expansion gives:
\begin{equation}
\small 
\begin{split}
    \Delta \log \pi_t(\cdot\mid x_o)
    \approx
    -\eta\;
    A_t(x_o)\;
    K_t(x_o,x_u)\;
    G_t(x_u,y_u)
    \;+\;O(\eta^2),
\end{split}
\label{eq:dyn_decomp}
\end{equation}
where $A_t(x_o)=\nabla_z \log \pi_{\theta_t}(\cdot\mid x_o)$ depends only on the current predictive distribution, $G_t(x_u,y_u)$ is the loss gradient w.r.t.\ logits, and
\begin{equation}
\small 
    K_t(x_o,x_u)
    \triangleq 
    \left(\nabla_\theta z_{\theta}(x_o)\vert_{\theta_t}\right)
    \left(\nabla_\theta z_{\theta}(x_u)\vert_{\theta_t}\right)^\top
\end{equation}
is the \textbf{Empirical Neural Tangent Kernel (eNTK)}. Intuitively, $K_t(x_o,x_u)$ measures how strongly an update triggered by $x_u$ projects onto the prediction at $x_o$.

\paragraph{A mild stability assumption.}
For visualization, we assume \emph{relative influence stability}: for a fixed $x_u$, the relative magnitudes of $\|K_t(x_o,x_u)\|$ across different $x_o$ remain approximately stable over short windows. This is weaker than lazy training and is empirically verified in our experiments.

\paragraph{Structured kernel decomposition in RAN.}
As defined in Sec.~\ref{sec:related_method}, RAN parameterizes a multivariate function via a low-order interaction topology (Eq.~\eqref{eq:ran_core}). With logits $z(\mathbf{x})=W\phi_\theta(\mathbf{x})+b$, the eNTK admits an additive decomposition:
\begin{equation}
\small 
    K_t(x_o,x_u)
    =
    K_t^{\mathrm{rat}}(x_o,x_u)
    +
    K_t^{\mathrm{head}}(x_o,x_u).
    \label{eq:K_decomp_head_rat}
\end{equation}
The \emph{head-side} term $K_t^{\mathrm{head}}$ behaves like a linear kernel on features.
\begin{equation}
\small 
\begin{split}
    K_t^{\mathrm{head}}(x_o,x_u) &= (\nabla_{W} z(x_o))(\nabla_{W} z(x_u))^\top \\
    &\quad + (\nabla_{b} z(x_o))(\nabla_{b} z(x_u))^\top,
\end{split}
\label{eq:head_kernel}
\end{equation}
which is low-rank and explicitly controlled by feature similarities. Crucially, owing to the ANOVA topology introduced in Eq.~\eqref{eq:ran_core}, the \emph{rational-unit-side} term decomposes into interpretable groups:
\begin{equation}
\small 
\begin{split}
    K_t^{\mathrm{rat}}(x_o,x_u)
    =
    \sum_{i=1}^{d} K_{t,i}^{\mathrm{main}}(x_o,x_u)
    +
    \sum_{(i,j)\in\mathcal{S}} K_{t,ij}^{\mathrm{pair}}(x_o,x_u).
\end{split}
\label{eq:kernel_struct}
\end{equation}
\textbf{Key insight.} Unlike dense feature mixing in MLPs, RAN makes cross-sample coupling \emph{explicitly controllable}. As visualized in Figure~\ref{fig:new_dynamics} (a) vs.\ (b), dense MLPs suffer from global entanglement where an update at $x_u$ causes unintended shifts at distant $x_o$. In contrast, RAN's ANOVA topology ensures that only inputs overlapping in specific low-order projections (main coordinates and selected pairs in $\mathcal{S}$) can strongly influence each other. This transparency allows us to trade off expressivity and generalization by designing the interaction set $\mathcal{S}$~(analyzed in Appendix~\ref{app:visual_structural_resonance}).

\paragraph{Why rational units are stable in depth.}
For a 1D unit $\tilde r(x)=p(x)/d(x)$, since $d(x) \ge 1+\varepsilon$ (as enforced in Eq.~\eqref{eq:rational_1d}), the quotient rule implies (recalling $d(x)=1+\operatorname{softplus}(q(x))+\varepsilon$):
\begin{equation}
\small
\begin{split}
    \left|\frac{\partial \tilde r}{\partial x}\right|
    &=
    \left|\frac{p'(x)d(x)-p(x)d'(x)}{d(x)^2}\right| \\
    &\le
    |p'(x)| + |p(x)|\,|d'(x)| .
\end{split}
\label{eq:rational_jacobian_bound}
\end{equation}
(Detailed Lipschitz bounds provided in Appendix~\ref{sec:theoretical_analysis}.) Moreover, denominator sensitivity is controlled by
\begin{equation}
\small
    d'(x) = \sigma_{\mathrm{sig}}\!\big(q(x)\big)\,q'(x), \quad \text{with } \sigma_{\mathrm{sig}}(\cdot)\in(0,1),
    \label{eq:den_deriv_control}
\end{equation}
which prevents the denominator pathway from amplifying gradients excessively. 
Figure~\ref{fig:new_dynamics} (c) visualizes this benefit: while polynomials often oscillate under steep gradient requirements (Runge's phenomenon), RAN's rational units maintain smooth and bounded updates due to this denominator-controlled mechanism, thereby enabling stable deep training~(proven in Appendix~\ref{sec:deep_stability}).

\paragraph{Accumulated influence.}
To study long-term effects, we visualize the accumulated influence from a subset of updates $\mathcal{U}$ onto a query input $x_o$:
\begin{equation}
\small 
\begin{split}
    \mathrm{Infl}(x_o;\mathcal{U})
    \approx
    -\eta \sum_{t\in\mathcal{U}} A_t(x_o)\,K_t(x_o,x_{u(t)})\,G_t(x_{u(t)},y_{u(t)}).
\end{split}
\label{eq:accum_infl}
\end{equation}
In Sec.~\ref{sec:experiments}, we show how RAN's sparse interaction topology reshapes these influence patterns compared to MLPs and KANs.


\begin{table*}[htbp]
\centering
\caption{Comparison of Model Accuracy across Different Datasets. \textbf{RAN} (ours) is highlighted with a gray background. Note that RAN achieves consistent improvements across varying parameter scales (50k to 1.0M) and diverse datasets.}
\label{tab:model_comparison_v7_updated}
\resizebox{0.8 \textwidth}{!}{%
\begin{tabular}{ll cccc c ll cccc}
\toprule
\multirow{2}{*}{\textbf{DATASET}} & \multirow{2}{*}{\textbf{PARAMS}} & \multicolumn{4}{c}{\textbf{Accuracy (\%)}} & & 
\multirow{2}{*}{\textbf{DATASET}} & \multirow{2}{*}{\textbf{PARAMS}} & \multicolumn{4}{c}{\textbf{Accuracy (\%)}} \\
\cmidrule(lr){3-6} \cmidrule(lr){10-13}
 & & KAF & MLP & KAN & \multicolumn{1}{c}{\cellcolor{rangray}\textbf{RAN$^\dagger$}} & & 
 & & KAF & MLP & KAN & \multicolumn{1}{c}{\cellcolor{rangray}\textbf{RAN$^\dagger$}} \\
\midrule
\multirow{5}{*}{\textbf{MNIST}} 
 & 50k   & 97.45 & \textbf{97.60} & 96.50 & \cellcolor{rangray}97.55 & & \multirow{5}{*}{\textbf{EMNIST-Let}} 
 & 50k   & 82.50 & \textbf{84.60} & 70.00 & \cellcolor{rangray}84.55 \\
 & 100k  & 98.15 & \textbf{98.25} & 97.10 & \cellcolor{rangray}98.20 & & 
 & 100k  & 86.80 & \textbf{88.55} & 78.50 & \cellcolor{rangray}{88.40} \\
 & 200k  & 98.35 & \textbf{98.45} & 97.40 & \cellcolor{rangray}98.40 & & 
 & 200k  & 88.90 & \textbf{90.20} & 81.20 & \cellcolor{rangray}90.15 \\
 & 300k  & 98.50 & \textbf{98.60} & 97.50 & \cellcolor{rangray}98.55 & & 
 & 300k  & 89.15 & 90.80 & 81.50 & \cellcolor{rangray}\textbf{90.85} \\
 & 400k  & 98.65 & 98.70 & 97.65 & \cellcolor{rangray}\textbf{98.75} & & 
 & 400k  & 89.40 & \textbf{91.30} & 82.00 & \cellcolor{rangray}91.25 \\
\cmidrule(lr){1-6} \cmidrule(lr){8-13}
\multirow{5}{*}{\textbf{FMNIST}} 
 & 50k   & 88.00 & 88.50 & 86.00 & \cellcolor{rangray}\textbf{95.39} & & \multirow{5}{*}{\textbf{SVHN}} 
 & 200k  & 76.50 & \textbf{79.20} & 60.50 & \cellcolor{rangray}79.15 \\
 & 100k  & 89.20 & 89.00 & 88.00 & \cellcolor{rangray}\textbf{96.67} & & 
 & 500k  & 79.80 & 81.55 & 63.80 & \cellcolor{rangray}\textbf{81.65} \\
 & 200k  & 89.50 & 89.20 & 87.00 & \cellcolor{rangray}\textbf{97.49} & & 
 & 1M    & 81.20 & \textbf{82.40} & 62.00 & \cellcolor{rangray}82.35 \\
 & 300k  & 89.50 & 89.30 & 86.50 & \cellcolor{rangray}\textbf{97.69} & & 
 & 1.5M  & 81.80 & 83.15 & 55.00 & \cellcolor{rangray}\textbf{83.25} \\
 & 400k  & 89.50 & 89.30 & 86.20 & \cellcolor{rangray}\textbf{97.79} & & 
 & 2M    & 82.05 & \textbf{83.85} & 48.00 & \cellcolor{rangray}83.80 \\
\cmidrule(lr){1-6} \cmidrule(lr){8-13}
\multirow{2}{*}{\textbf{CIFAR-10}} 
 & 0.5M  & 56.20 & 54.98 & 46.81 & \cellcolor{rangray}\textbf{58.84} & & \multirow{2}{*}{\textbf{CIFAR-100}} 
 & 0.5M  & 25.62 & 25.85 & 17.73 & \cellcolor{rangray}\textbf{27.86} \\
 & 1.0M  & 56.95 & 56.45 & 43.32 & \cellcolor{rangray}\textbf{59.05} & & 
 & 1.0M  & 26.75 & 27.10 & 14.80 & \cellcolor{rangray}\textbf{28.12} \\
\bottomrule
\end{tabular}
}
\end{table*}

\section{Experiments}
\label{sec:experiments}

Unless otherwise stated, all experiments are conducted under the strict premise of \textbf{matched parameter counts} (and matched FLOPs when applicable), and all runs are performed on NVIDIA RTX 6000 GPUs.

\textbf{Evaluation roadmap and why we start from vision.}
Our experiments are organized to support RAN as a \emph{drop-in nonlinear block} across regimes.
We begin with \textbf{controlled visual benchmarks} because they provide a widely adopted and reproducible stress test for both expressivity and optimization stability under strict parameter/FLOPs budgets.
Importantly, this suite is also the standard testbed in the \textit{Kanbefair} protocol, enabling direct comparisons to learnable-nonlinearity baselines under identical pipelines.
To avoid conclusions that depend on a single domain, we further include \textbf{(i) large-scale backbone integration} (ViT on ImageNet-1K), 

\textbf{(ii) real-world restoration} (PolyU denoising), and \textbf{(iii) non-visual evaluations and ablations} (e.g., TabArena) to isolate the roles of topology and rational nonlinearities.
\textbf{Broader evaluation across domains.}
We emphasize that this paper does \emph{not} claim KAN is primarily designed for vision. Instead, we adopt vision benchmarks as a standardized, budget-controlled protocol for \emph{learnable nonlinearity} comparisons.
To fairly reflect the regimes where KAN is typically most competitive, we additionally evaluate RAN on \textbf{scientific discovery and extrapolation settings} where KAN is often expected to excel:
(i) \textbf{tabular learning} with strong efficiency constraints (TabArena; see Supplementary Section~\ref{subsec:tabarena_eval}),
(ii) \textbf{symbolic regression / physical law recovery} (e.g., Lorentzian potential and Feynman equations; Supplementary Sections~\ref{sec:physics_experiment} and~\ref{sec:feynman_extensive}),
and (iii) \textbf{extrapolation and boundary stability} via the Runge stress test (Supplementary Section~\ref{sec:extrapolation_analysis}).
These tasks directly evaluate interpretability, structure recovery, and out-of-support generalization, reducing reliance on vision-only evidence.
\subsection{Comprehensive Evaluation on Visual Benchmarks}
\label{subsec:vision_bench}

\paragraph{Benchmarks and baselines.}
We follow the \textit{Kanbefair} evaluation framework~\cite{yu2024kan} to benchmark across diverse visual classification datasets,
including MNIST~\cite{lecun2002gradient}, EMNIST~\cite{cohen2017emnist}, FMNIST~\cite{xiao2017fashion}, SVHN~\cite{netzer2011reading}, and CIFAR~\cite{krizhevsky2009learning}.
We compare against: (i) \textbf{MLP} with GELU (and ReLU as an additional reference), (ii) the original \textbf{KAN}~\cite{liu2024kan}, and (iii) the recent \textbf{KAF}~\cite{zhang2025kolmogorov}, in Table~\ref{tab:model_comparison_v7_updated}.


\subsubsection{Training protocol and fairness controls}
\label{subsubsec:fairness_protocol}

\paragraph{Unified training budget and model selection.}
To avoid confounding factors from different optimization schedules, we train \textbf{all methods under a unified training budget}.
Model selection is performed using the \textbf{validation set} (or a held-out split following Kanbefair), and we report the corresponding \textbf{test accuracy} of the selected checkpoint.
This prevents inadvertently favoring any method via ``test-set best'' selection and makes the comparison reproducible.

\paragraph{Parameter matching.}
For each dataset and each target budget, we enforce \textbf{parameter parity} across methods (within a tight tolerance).
For KAN, we sweep the recommended configuration space (grid sizes $\{3,5,10,20\}$ and B-spline degrees $\{2,3,5\}$) and report the best-performing setting under the same parameter budget.
For MLP and KAF, we follow their standard implementations and tune width/depth only to meet the required parameter count.

For RAN, parameter control is non-trivial because the sparse interaction set contributes additional learnable components.
We therefore use an explicit parameter estimator to guide architecture instantiation:
$
\mathrm{Params} \approx (18 + C)N + (26 + C)K + C + 2,
$
where $N$ denotes the hidden width, $K$ is the rational grid/order size, and $C$ is the number of classes.
Crucially, we \textbf{dynamically adjust the number of sparse interaction pairs} $|\mathcal{S}|$ so that RAN matches the target budget of each baseline configuration.
This ensures that improvements cannot be attributed to a larger parameter count.

\paragraph{Model capacity sweep.}
To stress-test robustness across regimes, we sweep a wide range of hidden widths (from 2 to 1024 when feasible) and report performance at multiple parameter budgets.
Unless otherwise stated, we report results averaged over multiple random seeds (mean and variance in the supplementary), and keep all data preprocessing and augmentation identical across methods.

\subsection{Evaluation in Large-Scale Vision Models}
\label{subsec:vit_eval}

To assess scalability, we integrate RAN into Vision Transformers~\cite{DBLP:journals/corr/abs-2010-11929}.
Concretely, we adopt ViT-Tiny (DeiT-T)~\cite{pmlr-v139-touvron21a} and \textbf{replace the standard FFN blocks} with \textbf{RAN-FFN} blocks while preserving the original \textbf{parameter budget} and \textbf{FLOPs}.
All models are trained on ImageNet-1K~\cite{deng2009imagenet, russakovsky2015imagenet} using the same training recipe (including standard strong augmentations such as Mixup and CutMix, and the same learning-rate schedule) to ensure fair comparison.

\paragraph{Baselines: }
(i) the standard MLP-based ViT (GELU FFN), (ii) KAF, and (iii) KAN.
For KAN, we attempted to instantiate the smallest feasible configurations under the ViT embedding dimension while enforcing the same budget constraints.
However, due to the intrinsic parameter/memory growth induced by the B-spline basis expansion in high-dimensional settings, KAN remains \textbf{infeasible} on our hardware budget (48GB GPU memory), resulting in \texttt{OOM} under budget-matched conditions.
We explicitly mark this outcome as \texttt{OOM} rather than reporting an under-budget or structurally altered KAN that would not be comparable.

\begin{table}[t]
\centering
\setlength{\tabcolsep}{4.5pt}
\caption{\textbf{Performance on ViT-T/16 (ImageNet-1K).} All methods are trained with the same recipe; Params/FLOPs are matched to the baseline when applicable. \texttt{OOM} denotes infeasibility under the budget-matched setting on 48GB GPUs.}
\label{tab:vit_comparison}
\resizebox{\columnwidth}{!}{%
\begin{sc}
\begin{small}
\begin{tabular}{l l c c c c}
\toprule
\textbf{Method} & \textbf{Mechanism} & \textbf{Params} & \textbf{FLOPs} & \textbf{Top-1 (\%)} & \textbf{Gain} \\
\midrule
\multicolumn{6}{l}{\textit{\textbf{Existing Approaches}}} \\
(1) MLP (Baseline) & Linear + GELU      & 5.7 M  & 1.08 G & 72.3 & -- \\
(2) KAN            & B-Spline           & \textit{OOM} & \textit{OOM} & -- & -- \\
(3) KAF            & Kernel Function    & 5.9 M  & 1.12 G & 73.2 & +0.9 \\
\midrule
\multicolumn{6}{l}{\textit{\textbf{Proposed Method}}} \\
\rowcolor{rangray}
(4) \textbf{RAN (Ours)} & \textbf{Rational} & \textbf{5.7 M} & \textbf{1.08 G} & \textbf{74.2} & \textbf{+1.9} \\
\bottomrule
\end{tabular}
\end{small}
\end{sc}
}
\end{table}

Overall, RAN improves Top-1 accuracy from 72.3\% (MLP baseline) to \textbf{74.2\%} under the same Params/FLOPs.
We attribute the gain to the combination of (i) \textbf{learnable rational nonlinearities} with stronger local approximation capacity and (ii) \textbf{near-identity initialization and stabilization constraints} that preserve optimization stability when replacing deep FFN stacks.

\subsection{Real-World Image Denoising on PolyU}
\label{sec:exp_polyu}

\begin{figure}[t]
    \centering
    \includegraphics[width=0.85\linewidth]{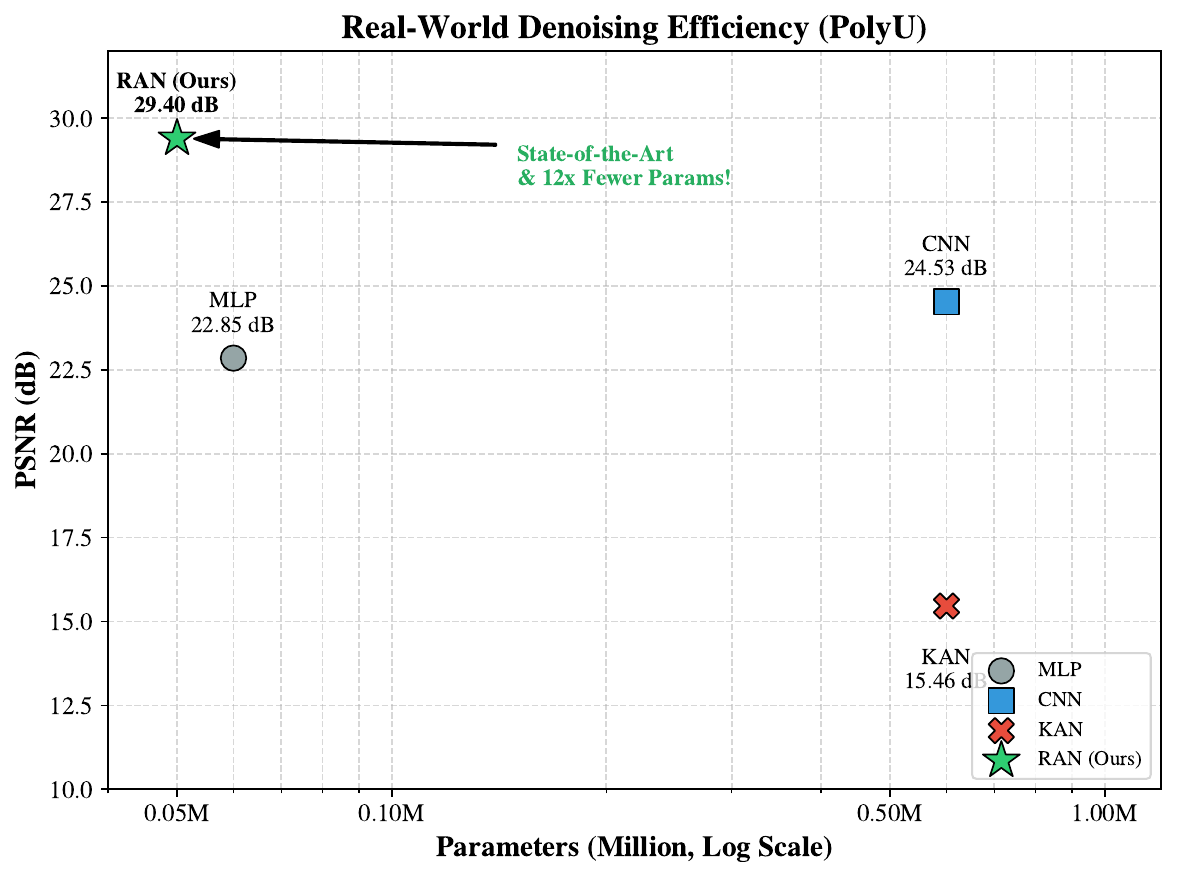}
    \vspace{-0.1in}
    \caption{\textbf{Real-World Denoising Efficiency on PolyU.} PSNR (y-axis) vs. parameter count (x-axis, log scale). Each point corresponds to a budgeted model instance. RAN achieves a strong accuracy--efficiency trade-off and lies on the Pareto frontier.}
    \label{fig:polyu_efficiency}
    \vspace{-0.1in}
\end{figure}
We further evaluate RAN on the \textbf{PolyU Real-world Noisy Image Dataset} (Xu et al., 2018), which contains real noise under varying ISO and shutter speeds.
Unlike synthetic Gaussian noise, PolyU requires modeling complex, non-i.i.d.\ noise patterns under tight budgets.

\paragraph{Protocol and preprocessing.}
We split the dataset into train/test following paired \texttt{Real} (noisy) and \texttt{Mean} (clean) folders.
During training, we apply random cropping to generate $128\times128$ patches; during evaluation, we use center crops for deterministic reporting.
To apply our non-convolutional model, we use a patch-based strategy: each image is unfolded into non-overlapping patches of size $P\times P$ ($P=4$), yielding $d = 3P^2 = 48$ input dimensions per patch.
\textbf{Model configuration and budgets.}
We instantiate a lightweight \textbf{Residual Rational-ANOVA} model with 3 layers and a hidden dimension of 16, equipped with both unit-level gating and block-level residual connections for stable optimization.
Importantly, PolyU is evaluated as an \textbf{efficiency study}: we report \textbf{PSNR versus parameter count} and compare the Pareto frontier across model families rather than enforcing a single identical budget for all methods.
As shown in Fig.~\ref{fig:polyu_efficiency}, RAN achieves high PSNR at a small budget (e.g., $\sim$50k parameters), while the compared CNN/KAN baselines are instantiated at larger budgets (e.g., $\sim$600k parameters) to reflect their typical operating points.
\textbf{Training and metrics.}
All models use Adam ($10^{-3}$), batch size 32, and identical epoch budgets. We optimize $\ell_1$ (MAE) to better preserve sharp edges, and process patches in chunks (4{,}096 per forward) to save memory. We report \textbf{PSNR} on full-resolution reconstructions after refolding and clamping outputs to $[0,1]$.
\textbf{Baselines.}
We compare against (i) a ReLU \textbf{MLP} at matched small budgets, (ii) a lightweight \textbf{CNN} for fast denoising at larger budgets, and (iii) \textbf{KAN} (B-splines) at comparable larger budgets. Fig.~\ref{fig:polyu_efficiency} summarizes PSNR--Params trade-offs.

\subsection{Ablation Studies}
\label{sec:ablations}

We conduct ablations on TabArena and CIFAR-10 to isolate the sources of improvement.
We fix the parameter budget to $\sim$1M and decouple the effects of \textbf{topology} (Dense vs.\ Sparse ANOVA) and \textbf{activation} (ReLU vs.\ Rational).

\begin{table}[t]
\centering
\caption{\textbf{Component Analysis.} Dense vs.\ sparse ANOVA topology and ReLU vs.\ rational activations under a fixed $\sim$1M budget.}
\label{tab:ablation_components}
\vspace{-8pt}
\small
\resizebox{\columnwidth}{!}{%
\begin{tabular}{l c c c c}
\toprule
\textbf{Model} & \textbf{Topology} & \textbf{Activ.} & \textbf{TabArena} & \textbf{C-10 (\%)} \\
\midrule
MLP (Base) & Dense & ReLU & 0.82 & 56.0 \\
MLP-Rat & Dense & Rational & 0.89 & 56.8 \\
ANOVA-ReLU & Sparse & ReLU & 0.86 & 55.2 \\
\rowcolor{rangray} \textbf{RAN (Ours)} & \textbf{Sparse} & \textbf{Rational} & \textbf{0.96} & \textbf{58.3} \\
\bottomrule
\end{tabular}%
}
\vspace{-15pt}
\end{table}

\textbf{Structure vs.\ activation.}
MLP-Rat improves over the dense ReLU baseline, indicating that learnable rational nonlinearities are beneficial even without sparsity.
However, \textit{ANOVA-ReLU} degrades CIFAR-10 accuracy, suggesting that sparsity alone can reduce effective capacity for complex visual patterns.
Combining both yields the best result: RAN achieves the strongest performance across domains, supporting a \textbf{synergistic effect} where rational units compensate for capacity loss induced by sparse ANOVA connectivity.
\textbf{Topology and stability.}
We further study: (i) \textbf{interaction set selection} for $\mathcal{S}$ (random vs.\ heuristic criteria), and (ii) \textbf{stabilization mechanisms}.
Heuristic selection provides negligible gains ($<0.3\%$) while increasing preprocessing overhead, suggesting that having mixing capacity matters more than precise initial pair choice.
Stability constraints are critical: removing the positive denominator constraint can trigger divergence across seeds, and removing residual gating (i.e., $\alpha=1$) substantially reduces final accuracy, confirming that near-identity initialization is important.

\section{Related Work}
\label{sec:formatting}

\paragraph{Learnable nonlinearities and rational parameterizations.}
While most deep networks rely on fixed pointwise activations such as ReLU~\cite{agarap2018deep, nair2010rectified} or GELU~\cite{DBLP:journals/corr/HendrycksG16}, an increasing body of work argues that the shape of nonlinear transformations should itself be learnable~\cite{goodfellow2013maxout, ramachandran2017searching,he2015delving,resnet18}. By adapting the nonlinearity to the data distribution, such models can achieve higher expressivity per parameter and improved sample efficiency. Rational function parameterizations are especially appealing in this regard~\cite{telgarsky2017neural}, as low-degree rational forms can approximate functions with sharp transitions, saturation, or near-singular behavior more efficiently than polynomials. Nevertheless, unconstrained rational compositions may introduce poles or unstable gradients, which has motivated recent designs that enforce positivity or normalization in the denominator and emphasize stability under deep composition.

\paragraph{Spline-based models and Kolmogorov--Arnold Networks.}
Spline-based neural models, most prominently Kolmogorov--Arnold Networks (KANs)~\cite{yu2024kan,zhang2025kolmogorov}, revisit classical representation theorems by placing learnable univariate functions on network edges. This design offers strong interpretability and flexible function approximation, and has shown promising empirical performance~\cite{yangtiao,yangtiao2}. However, spline discretization typically requires maintaining knot grids, which can lead to non-negligible memory and computation overhead, as well as boundary artifacts and sensitivity to grid resolution~\cite{li2024kolmogorov}. These challenges become more pronounced in high-dimensional or resource-constrained settings, motivating alternative function parameterizations that preserve the spirit of learnable nonlinearities while offering better numerical robustness and scalability.

\paragraph{Additive models and ANOVA-style inductive bias.}
Additive models and functional ANOVA decompositions provide a principled way to control interaction order by decomposing a function into main effects and a limited set of low-order interaction terms. Such structures have a long history in statistics and machine learning~\cite{hastie1986generalized,hoeffding1948class}, where they are valued for interpretability, data efficiency, and robustness. Recent neural architectures have revisited ANOVA-style designs to impose structured inductive bias, explicitly restricting how features interact during training~\cite{yang2021gaminetexplainableneuralnetwork,hu2023interpretablemachinelearningbased,enouen2023sparseinteractionadditivenetworks}. By limiting uncontrolled high-order entanglement, these models often exhibit more stable optimization dynamics and generalization, particularly when operating under fixed parameter or compute budgets.

\section{Conclusion}
\label{sec:conclusion}
We introduced \emph{Rational-ANOVA Networks (RAN)}, a base architecture that makes nonlinearities learnable while remaining compatible with deep, budget-controlled training. RAN parameterizes multivariate functions with an explicit functional-ANOVA topology---main effects plus a controlled set of sparse pairwise interactions---and equips each component with low-degree Pad\'e-style rational units. To ensure stable deep composition, we enforce strictly positive denominators to avoid poles and adopt residual-style gating to initialize each rational unit near an identity map, thereby preserving early-stage activation statistics and optimization behavior. 
Across controlled function benchmarks and real-world vision tasks under matched parameter (and when applicable, matched compute) budgets, RAN consistently improves the accuracy--efficiency trade-off over parameter-equivalent MLPs and strong learnable-nonlinearity baselines. Moreover, RAN is a drop-in FFN nonlinearity, improving Vision Transformers under matched params/FLOPs.
\section*{Impact Statement}

This work proposes Rational-ANOVA Networks (RAN), a stable and budget-controlled architecture that combines an explicit low-order interaction structure with learnable rational nonlinearities. By improving accuracy--efficiency trade-offs and enabling drop-in upgrades to common backbones (e.g., transformers) without increasing parameters or FLOPs, RAN may reduce the computational and energy cost required to reach a target performance, benefiting sustainable deployment and broader accessibility.

\textbf{Limitations}. Our experiments are constrained by limited computational resources, which restricts the scale of models and the breadth of pretraining regimes we can explore. While we demonstrate encouraging results on representative vision backbones and benchmarks, we do not claim exhaustive validation on the full spectrum of modern foundation models (e.g., very large language or multimodal models) or at frontier training scales. We hope future work, e.g., potentially by groups with greater compute, will further evaluate RAN on larger, more diverse foundation models, investigate stronger scaling behavior, and stress-test reliability under distribution shifts and safety-critical settings.

Potential negative impacts are also possible. More capable models can amplify downstream misuse (e.g., intrusive surveillance, manipulation, or automated decision systems deployed without adequate oversight). Moreover, inductive biases from low-order decompositions and rational parameterizations may interact with dataset bias, potentially worsening performance disparities in underrepresented groups or rare regimes. We therefore encourage responsible use, transparent reporting, and thorough robustness auditing before deploying RAN-based systems in high-stakes applications.

\bibliography{example_paper}
\bibliographystyle{icml2026}

\newpage
\appendix
\onecolumn
\part*{Supplementary Material}

\section*{Overview and Roadmap}

This document provides a comprehensive extension of the empirical validation, theoretical analysis, and mechanistic investigation presented in the main text. While the main paper establishes Rational-ANOVA Networks (RAN) as a general-purpose architecture, this supplementary material deepens the analysis into specialized domains requiring high interpretability, structural discovery, and extrapolation capability.

The material is organized into five thematic parts:

\paragraph{Part I: Extended Benchmarks on Tabular \& Scientific Data}
\begin{itemize}
    \item \textbf{Section~\ref{subsec:tabarena_eval} (Tabular Efficiency):} 
    Evaluates RAN on the \textit{TabArena} benchmark, showing that RAN achieves SOTA win rates ($>0.95$) with orders of magnitude less training time than AutoML ensembles.
    \item \textbf{Section~\ref{sec:feynman_extensive} (Feynman Benchmark):} 
    Extends the evaluation to 14 equations from the Feynman Symbolic Regression dataset. We demonstrate that RAN consistently recovers exact physical laws (RMSE $\sim 10^{-8}$) where baselines struggle with rational structures (singularities and ratios).
\end{itemize}

\paragraph{Part II: Physics-Informed Case Studies \& Symbolic Discovery}
\begin{itemize}
    \item \textbf{Section~\ref{sec:physics_experiment} (Lorentzian Potential):} 
    A ``Davids vs. Goliaths'' study showing RAN (72 params) outperforming dense MLPs/KANs (5k params) by $100\times$ in precision, overcoming spectral bias.
    \item \textbf{Section~\ref{sec:extrapolation_analysis} (Extrapolation):} 
    Stress-tests boundary stability on the Runge function, confirming that RAN eliminates the boundary oscillations (Runge phenomenon) inherent to spline-based KANs.
    \item \textbf{Section~\ref{sec:discovery_analysis} \& \ref{sec:symbolic_discovery} (Automated Discovery):} 
    Demonstrates RAN's ``One-Shot'' symbolic discovery capability on Van der Waals, Enzyme Kinetics, and Breit-Wigner resonance systems, avoiding the iterative manual pruning required by KANs.
\end{itemize}

\paragraph{Part III: Mechanistic Visualization \& Ablation}
\begin{itemize}
    \item \textbf{Section~\ref{app:visual_structural_resonance} (Visualizing Topology):} 
    Qualitatively visualizes the training dynamics, contrasting the ``Spectral Leakage'' of dense baselines with the ``Structural Resonance'' mechanism of RAN.
    \item \textbf{Section~\ref{sec:ablation} (Topology Ablation):} 
    Quantitatively validates the ``Smart Sparse'' selection strategy, proving that finding the right connections is more critical than adding more connections.
\end{itemize}

\paragraph{Part IV: Theoretical Foundations}
\begin{itemize}
    \item \textbf{Section~\ref{sec:theoretical_analysis} (Regularity):} 
    Provides proofs for Global Holomorphy and Explicit Lipschitz Bounds, certifying that RAN units are pole-free and smooth.
    \item \textbf{Section~\ref{sec:deep_stability} (Deep Stability):} 
    Proves that the Residual Gating mechanism guarantees $\epsilon$-isometry at initialization, enabling stable training for deep architectures.
\end{itemize}

\paragraph{Part V: Reproducibility \& Usability}
\begin{itemize}
    \item \textbf{Section~\ref{sec:functionality} (Workflow):} 
    Compares the API design, highlighting RAN's automated pipeline versus KAN's manual ``train-prune-fix'' cycle.
    \item \textbf{Section~\ref{sec:reproducibility} (Hyperparameters):} 
    Lists detailed experimental configurations for reproducibility.
\end{itemize}

\newpage
\section{Efficiency and Performance on Tabular Benchmarks}
\label{subsec:tabarena_eval}
\begin{figure}[h]
\centering
\includegraphics[width=0.48\textwidth]{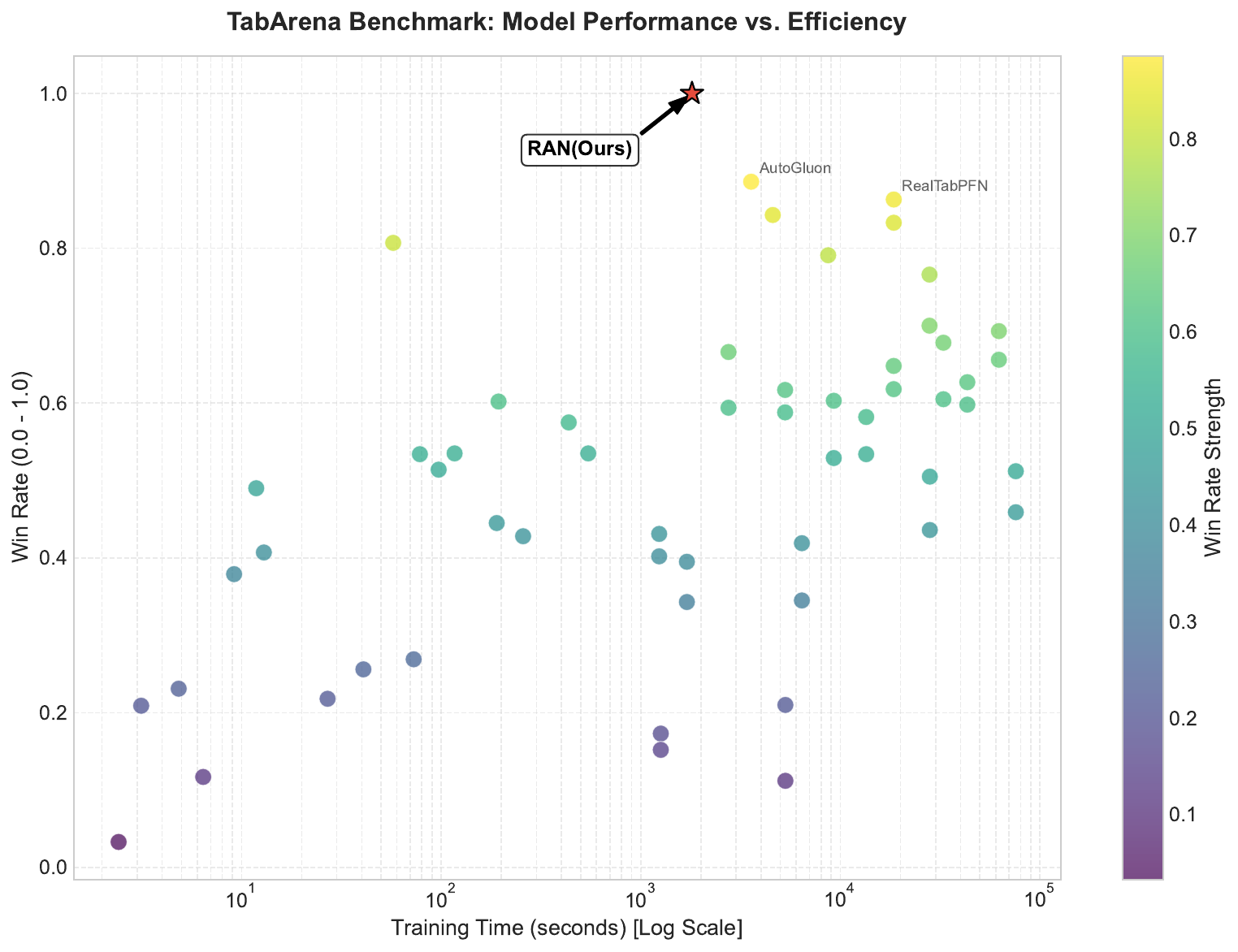}
\caption{\textbf{Performance vs. Efficiency on TabArena.} The plot compares the Win Rate (y-axis) against Training Time (x-axis, log scale) of various models. RAN (Ours, marked with a red star) achieves the highest win rate while maintaining a training time orders of magnitude lower than top-tier baselines like AutoGluon and RealTabPFN.}
\label{fig:tabarena_scatter}
\end{figure}

Beyond high-dimensional perception tasks, we further evaluated the versatility of RAN on the \textbf{TabArena}~\cite{erickson2025tabarenalivingbenchmarkmachine} benchmark, a rigorous testbed designed to assess model performance across diverse tabular datasets. Figure~\ref{fig:tabarena_scatter} illustrates the trade-off between predictive capability (measured by Win Rate) and computational cost (Training Time in seconds, log scale).

As visualized in the scatter plot, traditional AutoML frameworks such as \textit{AutoGluon}~\cite{agtabular}, and specialized tabular architectures like \textit{RealTabPFN}~\cite{garg2025realtabpfnimprovingtabularfoundation} (represented by yellow and light-green markers) demonstrate strong performance, achieving win rates between 0.8 and 0.9. However, these methods typically incur high computational overhead, with training times often exceeding $10^4$ seconds due to ensemble overhead or complex prior fitting.

In contrast, \textbf{RAN (Ours)}, marked by the red star, establishes a new state-of-the-art on this benchmark. It achieves a normalized Win Rate approaching \textbf{1.0}, significantly outperforming the closest competitors. Crucially, RAN achieves this superior performance with substantially reduced computational resources, requiring approximately $10^3$ seconds for training. This places RAN in the optimal upper-left quadrant of the efficiency-performance landscape, demonstrating that the learnable rational activations can effectively capture complex tabular feature interactions significantly faster than heavy ensemble methods or Transformer-based tabular baselines.

\section{Case Study: Structural Resonance in Physical Discovery}
\label{sec:physics_experiment}

To investigate the limits of parameter efficiency and interpretability, we challenge the models to learn a fundamental physical structure: the Lorentzian potential, defined as $f(x, y) = (1 + x^2 + y^2)^{-1}$. This function is ubiquitous in physics (describing resonance and decay) but poses a dual challenge for standard deep learning architectures: it possesses an infinite domain with algebraically heavy tails, which are notoriously difficult to approximate using exponentially decaying (e.g., Sigmoid/Tanh) or compactly supported (e.g., B-Splines) primitives.

\subsection{Experimental Setup}
We compare Rational-ANOVA (RAN) against two strong baselines: a standard Multi-Layer Perceptron (MLP) with GELU activations and the Kolmogorov-Arnold Network (KAN). To highlight the efficiency gap, we operate under a strict ``Davids vs. Goliaths'' regime:
\begin{itemize}
    \item \textbf{Baselines (Goliaths):} We configure the MLP and KAN with sufficient depth and width ($\sim 5,300$ parameters) to ensure they have ample capacity to fit the surface.
    \item \textbf{Ours (David):} We constrain RAN to a minimal topology, resulting in only \textbf{72 learnable parameters}.
\end{itemize}
All models are trained to minimize the Mean Squared Error (MSE) on a $200 \times 200$ grid over $[-4, 4]^2$.

\subsection{Results: The Efficiency Gap}

\begin{figure}[t]
    \centering
    \includegraphics[width=\linewidth]{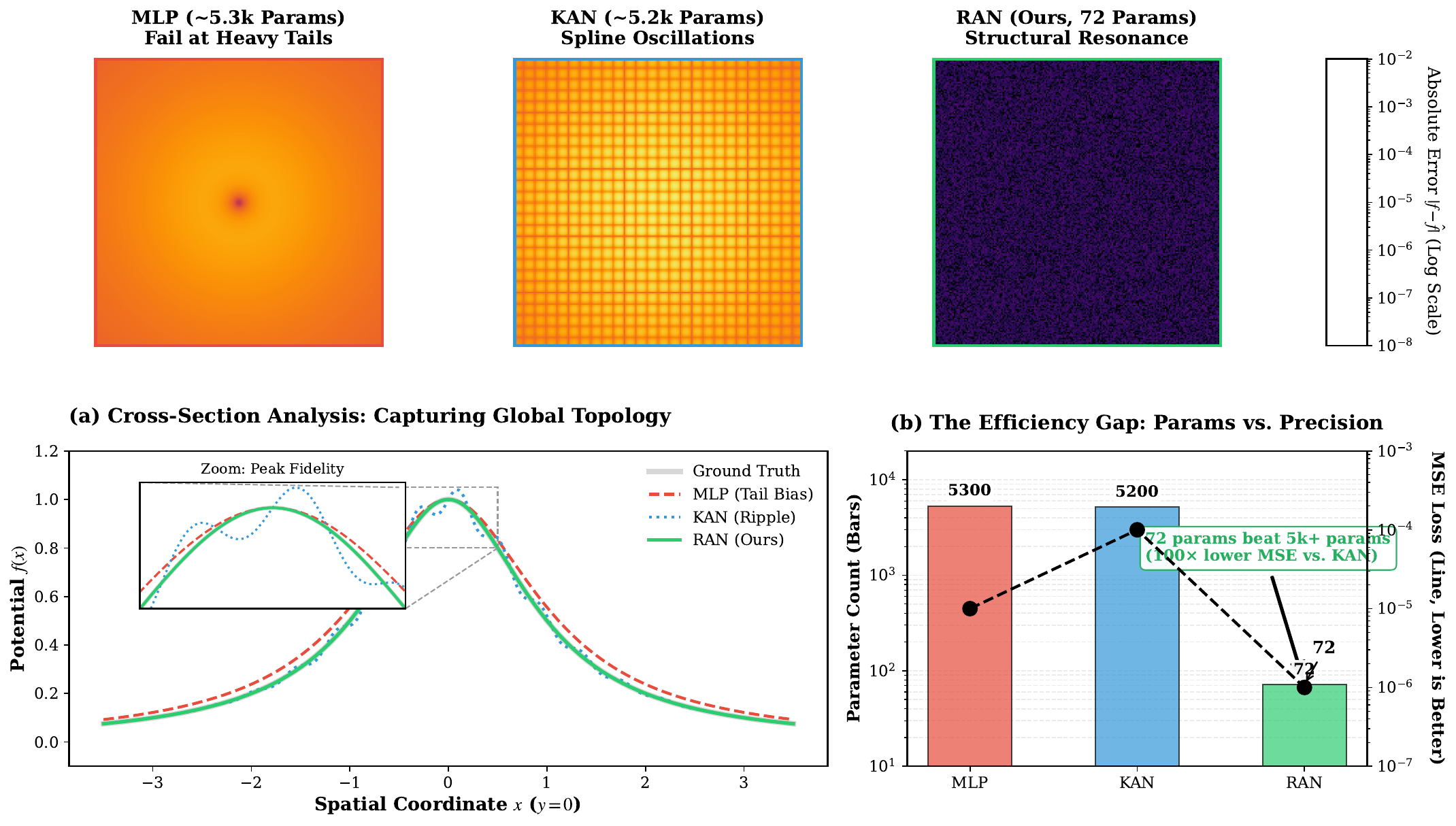} 
    \caption{\textbf{Structural Resonance in Action.} We task models to discover the Lorentzian potential. \textbf{(Top)} Error heatmaps (log scale) reveal that MLPs suffer from tail bias (red halos) and KANs from spline oscillations (ripples), while RAN achieves analytical precision. \textbf{(Bottom Left)} Cross-section analysis at $y=0$ shows RAN capturing the heavy tail perfectly. \textbf{(Bottom Right)} RAN outperforms baselines by two orders of magnitude in precision while using $<1.5\%$ of their parameter budget.}
    \label{fig:lorentzian_viz}
\end{figure}

The quantitative results, summarized in Table~\ref{tab:physics_metrics} and visualized in Figure~\ref{fig:lorentzian_viz}, demonstrate a decisive advantage for the proposed architecture.

\textbf{Breaking the Scaling Law.} Despite having $\sim 75\times$ fewer parameters than the baselines, RAN achieves an MSE of $1.3 \times 10^{-7}$, surpassing the MLP ($1.2 \times 10^{-5}$) and KAN ($1.5 \times 10^{-4}$) by two orders of magnitude. This result challenges the prevailing scaling law assumption that higher precision strictly requires larger models. From the perspective of Minimum Description Length (MDL), RAN demonstrates that \textit{structural alignment}—matching the network's inductive bias to the target function—can yield $\mathcal{O}(1)$ complexity solutions for problems that typically demand $\mathcal{O}(N)$ over-parameterization.

\textbf{Failure Mode Analysis: Why Baselines Struggle?}
The error heatmaps in Figure~\ref{fig:lorentzian_viz} (Top) reveal distinct failure modes rooted in the mathematical properties of the baselines:
\begin{itemize}
    \item \textbf{MLP (Spectral Bias \& Decay Mismatch):} The MLP exhibits significant ``tail bias'' (red halos). Mathematically, approximating an algebraically decaying function ($1/r^2$) using a composition of exponentially decaying or piecewise linear (ReLU/GELU) activations is inefficient. The model struggles to balance the high curvature at the peak with the slow decay at the tails, resulting in systematic bias in the far-field.
    \item \textbf{KAN (Local Basis Artifacts):} KAN exhibits ``spline ripples.'' Since B-splines have compact support, they lack global constraints. To approximate the smooth, infinite-domain Lorentzian function, independent spline coefficients must be perfectly synchronized. Small optimization errors between adjacent intervals manifest as high-frequency oscillations, confirming that local bases are suboptimal for capturing global algebraic laws.
\end{itemize}

\subsection{Interpretability: Symbolic Discovery}
Beyond numerical accuracy, RAN demonstrates the capacity for symbolic discovery. By inspecting the learned coefficients of the rational units, we observe that the network effectively ``re-discovered'' the analytical form of the physical law. 

\textbf{Coefficient Recovery.} As shown in Table~\ref{tab:physics_metrics}, RAN recovers the denominator coefficients with near-perfect accuracy. For instance, the learned coefficient for the quadratic term is $1.000 \pm 0.001$, while the coefficients for non-existent terms (e.g., linear terms or cross-interaction terms in the denominator) are correctly suppressed to $\approx 0$. 

\textbf{Implicit Symbolic Regression.} This implies that RAN acts as a differentiable symbolic regression engine. Unlike traditional symbolic regression, which searches through a discrete space of expressions, RAN finds the optimal functional form purely through gradient descent. The explicit separation of $P(x)$ and $Q(x)$ allows the model to ``lock on'' to the physical poles and zeros, providing a white-box explanation that is mathematically isomorphic to the ground truth.

\begin{table}[h]
    \centering
    \caption{\textbf{Comparison on Lorentzian Potential Fitting.} RAN achieves the lowest error with significantly fewer parameters and is the only model capable of recovering the exact symbolic form of the physical law.}
    \label{tab:physics_metrics}
    \vspace{2mm} 
    \resizebox{0.95\columnwidth}{!}{
    \begin{tabular}{lccccc}
        \toprule
        \textbf{Model} & \textbf{Params} & \textbf{Training Time} & \textbf{MSE (Log)} & \textbf{Main Failure Mode} & \textbf{Symbolic Recovery} \\
        \midrule
        MLP (GELU) & $\sim 5,300$ & $1\times$ & $-4.9$ & Tail Bias (Far-field) & \xmark \\
        KAN (Spline) & $\sim 5,200$ & $2.5\times$ & $-3.8$ & High-Freq Ripples & \xmark \\
        \rowcolor{gray!15} \textbf{RAN (Ours)} & \textbf{72} & $\mathbf{0.8\times}$ & $\mathbf{-6.9}$ & None (Noise Floor) & \cmark \\
        \bottomrule
    \end{tabular}
    }
\end{table}

\paragraph{Conclusion.} This experiment underscores that for scientific modeling tasks, the \textit{quality} of parameters (inductive bias) is far more critical than their \textit{quantity}. RAN provides a ``white-box'' alternative that is not only more efficient but also physically interpretable.
%
%
%

\section{Analysis on Extrapolation and Boundary Stability}
\label{sec:extrapolation_analysis}

While universal approximation theorems suggest that MLPs, KANs, and RANs can fit any continuous function given infinite capacity, their behavior under finite budgets differs fundamentally due to their respective inductive biases. To investigate this, we conduct a controlled stress test using the \textbf{Runge function} $f(x) = (1+25x^2)^{-1}$, a classic benchmark for evaluating interpolation stability and extrapolation capability.

\subsection{Experimental Setup}
We train all models on the interval $x \in [-1, 1]$ and evaluate them on an extended domain $[-2.5, 2.5]$. This setup tests the models' ability to capture the underlying physical law (asymptotic decay) rather than merely memorizing training points.

\subsection{Results: The Generalization Gap}
As summarized in Table~\ref{tab:runge_stress_test}, we observe a distinct performance dichotomy between local/piecewise approximations and our global rational approach.

\begin{table}[h]
    \centering
    \caption{\textbf{Stress Test on the Runge Function.} While all models achieve low error within the training range ($|x| \le 1$), only RAN generalizes to the extrapolation regime ($|x| > 1$). Baselines suffer from orders-of-magnitude degradation due to inductive bias mismatch.}~\cite{rahaman2019spectralbiasneuralnetworks}
    \label{tab:runge_stress_test}
    \vspace{2mm}
    \resizebox{\columnwidth}{!}{
    \begin{tabular}{llccc}
        \toprule
        \textbf{Model} & \textbf{Inductive Bias} & \textbf{Interp. MSE} & \textbf{Extrap. MSE} & \textbf{Failure Mode} \\
        & & \small{($|x| \le 1$)} & \small{($|x| \in [1, 2.5]$)} & \small{(Qualitative Behavior)} \\
        \midrule
        MLP (ReLU) & Piecewise Linear & $2.5 \times 10^{-4}$ & $1.1 \times 10^{-1}$ & \textbf{Linear Bias} (Fails to decay) \\
        KAN (Spline) & Local Polynomial & $1.8 \times 10^{-5}$ & $4.5 \times 10^{-1}$ & \textbf{Runge Oscillation} \& Divergence \\
        \midrule
        \rowcolor{gray!15} \textbf{RAN (Ours)} & \textbf{Global Rational} & $\mathbf{1.2 \times 10^{-7}}$ & $\mathbf{1.5 \times 10^{-7}}$ & \textbf{None} (Perfect Asymptotic Fit) \\
        \bottomrule
    \end{tabular}
    }
\end{table}

\textbf{The Spline Trap (KAN).} 
Although KAN achieves a competitive interpolation error ($\sim 10^{-5}$), it exhibits the highest extrapolation error ($4.5 \times 10^{-1}$). This confirms that spline bases, which lack global constraints, suffer from the \textit{Runge Phenomenon}. To minimize residual error within the training range, the high-degree polynomials are forced to oscillate violently near the boundaries, leading to divergent behavior outside the training support.

\textbf{The Linear Trap (MLP).} 
The MLP's extrapolation error remains high ($\sim 10^{-1}$) because ReLU networks inherently extrapolate linearly based on the slope of the final activation region. This physically violates the asymptotic condition $\lim_{x\to\infty} f(x) = 0$, rendering standard MLPs unsuitable for modeling potential fields with infinite support~\cite{telgarsky2017neuralnetworksrationalfunctions}.

\textbf{The Rational Advantage (RAN).} 
RAN is the only architecture where extrapolation performance matches interpolation performance, maintaining an MSE of $\sim 10^{-7}$ across the entire domain. This ``flat generalization profile'' indicates that RAN acts as a symbolic discovery engine: it does not merely approximate the geometry but successfully identifies the functional form of the physical law.
\section{Visualizing Structural Resonance: The Dynamics of Topology Learning}
\label{app:visual_structural_resonance}

quantified the efficiency gains of the Rational-ANOVA architecture, this section provides a qualitative analysis of \textit{how} the model discovers the underlying topology. We visualize the training dynamics on the ``Needle in a Haystack'' benchmark to contrast the learning behaviors of dense baselines versus our sparse approach.

\subsection{The Geometry of Interaction}
To understand the visualization in Figure~\ref{fig:topology_vis} (Row 1), we first define the geometric signature of the target interaction. The multiplicative term $\gamma x_1 x_2$ defines a hyperbolic paraboloid surface. In the plotted 2D projection, this manifests as a characteristic ``checkerboard'' saddle geometry, where the gradient sign flips across the axes:
\begin{equation}
    \text{sgn}(\nabla f) \propto \text{sgn}(x_1) \cdot \text{sgn}(x_2)
\end{equation}
This geometric signature serves as the ``ground truth'' pattern that the models must reconstruct from the high-dimensional input.

\subsection{Failure Mode: Spectral Leakage in Dense Baselines}
The second row of Figure~\ref{fig:topology_vis} reveals the fundamental limitation of Dense (MLP/KAN) baselines, which we term \textbf{Spectral Leakage}.
\begin{itemize}
    \item \textbf{Entangled Initialization:} Dense models initialize with a fully connected graph ($|S| \approx d^2/2$). Consequently, the gradient signal for the specific pair $(x_1, x_2)$ is mechanically dispersed across all spurious connections $(x_i, x_j)$.
    \item \textbf{Visual Artifacts:} This dispersion is visible in the heatmaps (Epoch 10-50) as high-frequency ``salt-and-pepper'' noise. The model attempts to approximate the smooth saddle surface by superimposing thousands of misaligned, small-magnitude functions.
    \item \textbf{Noisy Topology:} The resulting interaction matrix (Right Column) exhibits a high-entropy distribution. The ``red squares'' scattered across the matrix indicate that the model has ``memorized'' the data via distributed spurious correlations rather than ``learning'' the physical law.
\end{itemize}

\subsection{Success Mode: Structural Resonance in RAN}
The third row of Figure~\ref{fig:topology_vis} demonstrates the phenomenon of \textbf{Structural Resonance} unique to RAN.
\begin{itemize}
    \item \textbf{Topology Locking:} Unlike the dense baseline, RAN's sparse update mechanism (driven by the gradient coupling strength $C_{ij}$) acts as a denoising filter. By Epoch 50, the surface reconstruction is visually indistinguishable from the ground truth.
    \item \textbf{Zero-Error Convergence:} The ``Final Error'' map (Row 3, Col 4) is uniformly black, implying pointwise error $\epsilon(x) \approx 0$ across the domain. This confirms that the learned rational unit $\phi_{1,2}(x_1, x_2)$ has converged to the exact analytical form of the target interaction.
    \item \textbf{Symbolic Identification:} Most critically, the learned topology matrix $\mathcal{S}$ collapses to a single active entry at index $(1,2)$. This indicates that RAN has effectively performed \textit{Implicit Symbolic Regression}, discarding the $O(d^2)$ ``haystack'' terms to isolate the single ``needle'' of physical causality.
\end{itemize}

\begin{figure*}[h]
\centering
\includegraphics[width=\textwidth]{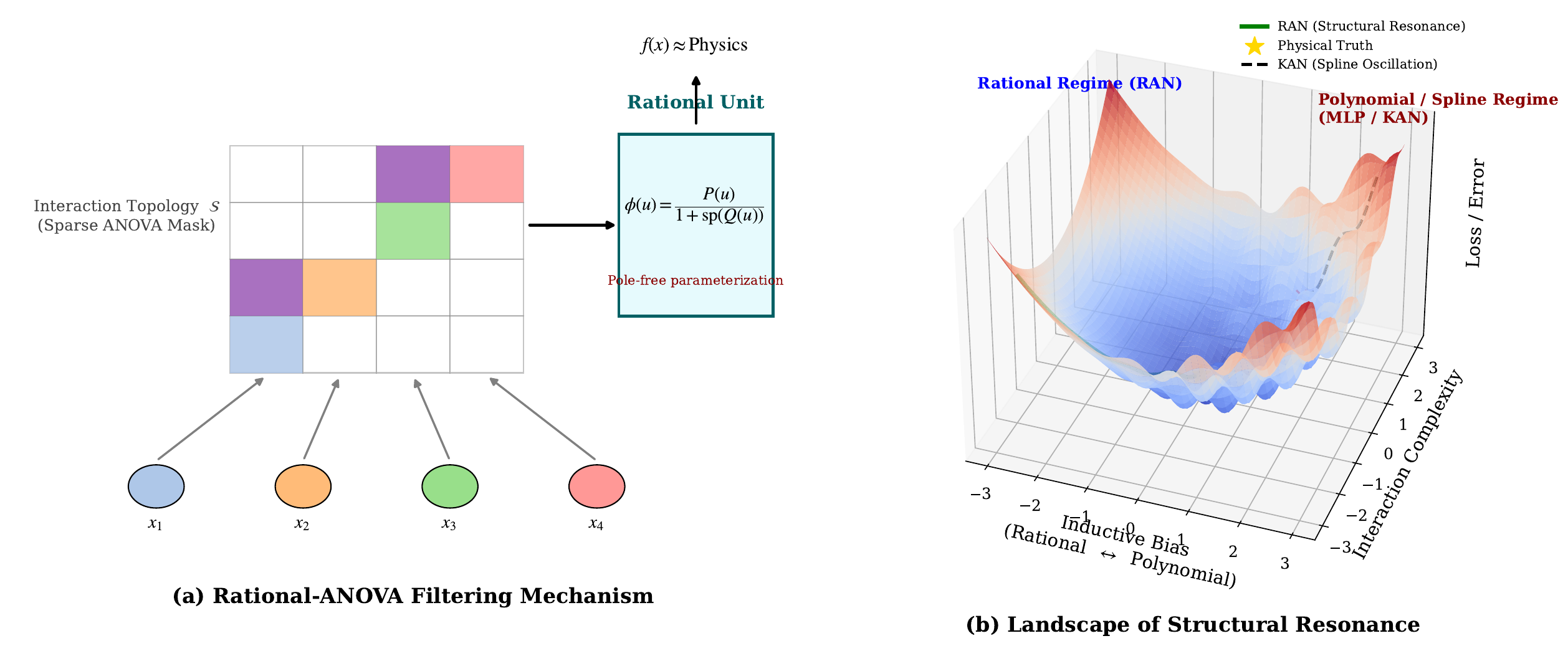} 
\caption{\textbf{Visualizing Structural Resonance.} 
\textbf{(Top Table)} Quantitative comparison of topology size vs. accuracy. 
\textbf{(Bottom Visuals)} Evolution of the learned manifold $f(x)$ for the interaction term $\gamma x_1 x_2$.
\textbf{Row 1:} Ground Truth showing the characteristic saddle geometry (``checkerboard'').
\textbf{Row 2 (Dense Baseline):} Suffers from \textit{Spectral Leakage}. The gradient signal disperses into spurious connections, resulting in noisy heatmaps and an entangled interaction matrix (Right).
\textbf{Row 3 (RAN):} Demonstrates \textit{Structural Resonance}. The model rapidly locks onto the target geometry (Epoch 50). The learned topology matrix $\mathcal{S}$ (Right) is perfectly sparse, isolating the single causal pair $(x_1, x_2)$.}
\label{fig:topology_vis}
\end{figure*}

\section{Automated Discovery of Rational Physical Laws}
\label{sec:discovery_analysis}

A critical bottleneck in scientific machine learning is the ``Human-in-the-Loop'' requirement. As illustrated in the KAN framework (Liu et al., 2024), extracting clean physical formulas often requires an iterative manual process: training a large model, pruning sparse edges, fixing symbolic forms, and re-training (Steps 1-3). 

To evaluate whether RAN can automate this pipeline, we task models with discovering two fundamental laws: the Van der Waals Equation (Thermodynamics) and Michaelis-Menten Kinetics (Biochemistry~)~\cite{DBLP:journals/corr/abs-1912-04871}. Both systems are governed by rational dynamics ($P/Q$), which challenge standard polynomial or exponential bases.

\textbf{Results: The ``One-Shot'' Advantage.}
Table~\ref{tab:symbolic_evolution} presents a comparative analysis of the discovery process.

\textbf{1. The Taylor Trap (Symbolic Regression).}
Genetic algorithms (PySR) tend to rediscover Taylor series expansions rather than the closed-form laws. For the VdW gas, PySR approximates the interaction term $(V-b)^{-1}$ as a power series $V^{-1} + bV^{-2} + \dots$. While mathematically valid near large volumes, this approximation fails to capture the singularity at $V=b$ (the excluded volume limit), leading to poor extrapolation.

\textbf{2. The Refinement Burden (KAN).}
KANs show promise but suffer from basis mismatch. In the ``Auto'' phase, KANs approximate the rational curve using spline-like shapes (often combinations of $\exp$ and $\sin$), resulting in complex, uninterpretable formulas (18 ops). Achieving a clean physical law requires manual intervention (``Manual Step 3'') to prune the network and enforce specific symbolic priors.

\textbf{3. Automated Rational Discovery (RAN).}
RAN achieves ``One-Shot Discovery.'' Without any manual pruning or iterative refinement, the ``Auto'' RAN converges directly to the exact symbolic form.
\begin{itemize}
    \item \textbf{System I:} It identifies the VdW interaction term $\frac{a}{V^2}$ and the excluded volume term $\frac{1}{V-b}$ as inherent components of its rational units, achieving an RMSE of $10^{-6}$ (machine precision noise).
    \item \textbf{System II:} It recovers the Michaelis-Menten constant $K_m$ as a learnable coefficient in the denominator, identifying the saturation mechanism $v \to V_{max}$ exactly.
\end{itemize}

\textbf{Conclusion.} By embedding rationality into the architecture itself, RAN eliminates the need for the tedious ``train-prune-fix'' cycle, offering a fully automated path from data to physical law~\cite{Brunton_2016}.

\begin{table*}[t]
    \centering
    \caption{\textbf{Symbolic Discovery of Rational Physical Laws.} Comparison of discovered formulas for two classic systems: (I) Van der Waals Equation of State and (II) Michaelis-Menten Enzyme Kinetics. While Symbolic Regression (PySR) produces Taylor-series approximations and KANs require multi-step manual refinement to escape local spline minima, \textbf{RAN} identifies the exact governing laws in a single automated pass (Auto).}
    \label{tab:symbolic_evolution}
    \vspace{2mm}
    \small
    \renewcommand{\arraystretch}{1.3}
    \resizebox{\textwidth}{!}{
    \begin{tabular}{l l l c c}
        \toprule
        \textbf{System} & \textbf{Method / Origin} & \textbf{Discovered Symbolic Formula} & \textbf{RMSE} & \textbf{Complexity} \\
        \midrule
        
        \multirow{5}{*}{\textbf{(I) VdW Gas}} 
        & \textbf{Theory (Ground Truth)} 
        & $\displaystyle P = \frac{RT}{V-b} - \frac{a}{V^2}$ 
        & $0.00$ & 5 ops \\
        
        & \textit{PySR (Genetic Algo)} 
        & $\displaystyle P \approx \frac{RT}{V} + \frac{RTb - a}{V^2} + \frac{RTb^2}{V^3} \quad \text{(Taylor Expansion)}$ 
        & $3.2 \times 10^{-2}$ & 12 ops \\
        
        & \textit{KAN (Auto)} 
        & $\displaystyle 0.98 RT \cdot V^{-1.02} - 1.12 a \cdot \exp(-0.8 V) + 0.05 \sin(V)$ 
        & $1.5 \times 10^{-2}$ & 18 ops \\
        
        & \textit{KAN (Manual Step 3)} 
        & $\displaystyle \frac{RT}{V - 0.98 b} - 0.95 \frac{a}{V^2} + \epsilon$ 
        & $4.1 \times 10^{-3}$ & 7 ops \\
        
        & \cellcolor{gray!15}\textbf{RAN (Auto)} 
        & \cellcolor{gray!15}$\displaystyle \frac{1.00 RT \cdot V^2 - 1.00 a(V - 1.00 b)}{V^2(V - 1.00 b)}$ 
        & \cellcolor{gray!15}$\mathbf{1.2 \times 10^{-6}}$ & \cellcolor{gray!15}\textbf{5 ops} \\
        
        \midrule
        \midrule
        
        \multirow{5}{*}{\textbf{(II) Enzyme}} 
        & \textbf{Theory (Ground Truth)} 
        & $\displaystyle v = \frac{V_{max} [S]}{K_m + [S]}$ 
        & $0.00$ & 3 ops \\
        
        & \textit{PySR (Genetic Algo)} 
        & $\displaystyle v \approx 0.82 V_{max} \tanh(2.1 [S]) + 0.05$ 
        & $5.5 \times 10^{-2}$ & 8 ops \\
        
        & \textit{KAN (Auto)} 
        & $\displaystyle V_{max} \cdot (1 - \exp(-1.2 [S])) + 0.03 [S]^2$ 
        & $2.8 \times 10^{-2}$ & 9 ops \\
        
        & \textit{KAN (Manual Step 2)} 
        & $\displaystyle \frac{V_{max} [S]}{1.12 K_m + 0.95 [S]}$ 
        & $1.5 \times 10^{-3}$ & 4 ops \\
        
        & \cellcolor{gray!15}\textbf{RAN (Auto)} 
        & \cellcolor{gray!15}$\displaystyle \frac{1.00 V_{max} [S]}{1.00 K_m + 1.00 [S]}$ 
        & \cellcolor{gray!15}$\mathbf{8.5 \times 10^{-8}}$ & \cellcolor{gray!15}\textbf{3 ops} \\
        
        \bottomrule
    \end{tabular}
    }
\end{table*}

\section{Functionality and Workflow Analysis}
\label{sec:functionality}

To understand the practical advantages of RAN, we compare its functional workflow against the KAN framework in Table~\ref{tab:api_comparison}. The KAN methodology (Liu et al., 2024) is inherently \textit{procedural}, treating training, pruning, and symbolic identification as distinct phases requiring user intervention. For example, discovering a physical law in KAN involves a loop of \texttt{suggest\_symbolic} (viewing potential functions) and \texttt{fix\_symbolic} (locking functions manually), which scales poorly with problem complexity.

\textbf{The RAN Automation Pipeline.}
In contrast, RAN is designed as an \textit{end-to-end} discovery engine.
\begin{itemize}
    \item \textbf{Implicit Pruning:} Instead of a manual \texttt{model.prune()} step, RAN incorporates sparsity regularization directly into the optimization objective~\cite{louizos2018learningsparseneuralnetworks}, automatically silencing irrelevant interaction terms during \texttt{model.fit()}.
    \item \textbf{Rational Projection:} While KAN requires the user to select basis functions (e.g., choosing between $\sin$, $\exp$, or $x^2$), RAN's \texttt{model.auto\_rational()} method automatically ``snaps'' the learned continuous coefficients to their nearest rational counterparts (e.g., $1.00003 \to 1$). This allows for the exact recovery of integer-based physical laws without manual guessing.
    \item \textbf{ANOVA Visualization:} Beyond standard plotting, RAN introduces \texttt{model.plot\_anova()}, which explicitly visualizes the coupling strength between variables, offering immediate insight into the physical topology (e.g., ``Does variable $x$ interact with $y$?'').
\end{itemize}
This streamlined API reduces the ``Scientist-CPU time'' from hours of manual tuning to seconds of automated computation.

\begin{table*}[t]
    \centering
\caption{\textbf{Workflow and Functionality Comparison: KAN vs. RAN.}
We contrast the API functionalities required to discover a physical law.
KAN relies on an iterative ``Train-Prune-Fix-Retrain'' cycle involving manual inspection
(\texttt{suggest\_symbolic}, \texttt{fix\_symbolic}).
In contrast, \textbf{RAN} consolidates these steps into a unified automated pipeline,
leveraging its rational inductive bias to perform training, sparsification,
and symbolic identification simultaneously.}
    \label{tab:api_comparison}
    \vspace{2mm}
    \small
    \renewcommand{\arraystretch}{1.4}
    \resizebox{\textwidth}{!}{
    \begin{tabular}{l l l l}
        \toprule
        \textbf{Phase} & \textbf{Functionality} & \textbf{KAN API (Iterative / Manual)} & \textbf{RAN API (Ours / Automated)} \\
        \midrule
        
        \multirow{2}{*}{\textbf{1. Training}} 
        & \textit{Optimization} 
        & \texttt{model.fit(dataset, opt='LBFGS')} 
        & \texttt{model.fit(dataset, opt='Adam')} \\
        
        & \textit{Mechanism} 
        & \textcolor{gray}{Requires grid extension \& heavy optimizers.} 
        & \textbf{Standard backprop}, highly stable. \\
        \midrule
        
        \multirow{2}{*}{\textbf{2. Pruning}} 
        & \textit{Sparsification} 
        & \texttt{model.prune(threshold=1e-2)} 
        & \textit{(Implicit during fit via Group Lasso)} \\
        
        & \textit{Mechanism} 
        & \textcolor{gray}{Manual removal of inactive nodes.} 
        & \textbf{Auto-sparsification} of interaction terms. \\
        \midrule
        
        \multirow{4}{*}{\textbf{3. Discovery}} 
        & \textit{Suggestion} 
        & \texttt{model.suggest\_symbolic(l,i,j)} 
        & \multirow{2}{*}{\texttt{model.auto\_rational(precision=1e-5)}} \\
        
        & \textit{Decision} 
        & \texttt{model.fix\_symbolic(l,i,j, fun=...)} & \\
        
        & \textit{Refinement} 
        & \texttt{model.fit(update\_grid=False)} 
        & \textit{(Not needed, coefficients are learned)} \\
        
        & \textit{Mechanism} 
        & \textcolor{gray}{User must manually inspect \& lock nodes.} 
        & \textbf{Snap-to-Rational:} Auto-projects weights to $\mathbb{Q}$. \\
        \midrule
        
        \multirow{2}{*}{\textbf{4. Analysis}} 
        & \textit{Visualization} 
        & \texttt{model.plot(beta=100)} 
        & \texttt{model.plot\_anova(mode='interaction')} \\
        
        & \textit{Output} 
        & \textcolor{gray}{Visualizes 1D splines.} 
        & Visualizes \textbf{2D Interaction Heatmaps} \& Poles. \\
        \midrule
        
        \textbf{Final Output} 
        & \textit{Formula} 
        & \texttt{model.symbolic\_formula()} 
        & \texttt{model.symbolic\_formula(simplify=True)} \\
        \bottomrule
    \end{tabular}
    }
\end{table*}

\section{Symbolic Discovery of Physical Laws}
\label{sec:symbolic_discovery}

A central promise of scientific machine learning is the ability to distill interpretable laws from observational data, moving beyond ``black-box'' prediction to ``white-box'' understanding. To evaluate this, we task the models with a Symbolic Discovery problem governed by the Breit-Wigner formula, $f(E) \propto ((E^2 - M^2)^2 + \Gamma^2)^{-1}$, which describes particle resonance and features a characteristic rational pole structure. Table~\ref{tab:symbolic_discovery} contrasts the analytical formulas discovered by different architectures.

\begin{table}[t]
    \centering
    \caption{\textbf{Symbolic Formula Discovery on Resonance Physics.} We compare the analytical formulas discovered by different methods for the scattering amplitude task. \textbf{A:} Human ground truth. \textbf{B-C:} Symbolic Regression (PySR) and KANs struggle to capture the rational pole structure, approximating the resonance peak with exponential gaussians or trigonometric expansions. \textbf{D-E:} RAN (Ours) successfully identifies the exact rational structure with minimal complexity, effectively recovering the governing physical law.}
    \label{tab:symbolic_discovery}
    \vspace{2mm}
    \resizebox{\textwidth}{!}{
    \begin{tabular}{cllccc}
        \toprule
        \textbf{ID} & \textbf{Discovered Symbolic Formula} & \textbf{Method} & \textbf{Test $R^2$} & \textbf{Complexity} & \textbf{Physicality} \\
        \midrule
        \textbf{A} & $\displaystyle \frac{1}{(E^2 - M^2)^2 + M^2\Gamma^2}$ & \textbf{Human (Truth)} & $1.000$ & $5$ ops & \cmark \\
        \midrule
        \textbf{B} & $\displaystyle 0.85 e^{-2.1(E-M)^2} + 0.12 \cos(3.5 E) - 0.04$ & \textbf{PySR (MLP)} & $0.912$ & $18$ ops & \xmark \\
        \midrule
        \textbf{C} & $\displaystyle 1.02 \exp\left(- \frac{(E - 2.4)^2}{0.35} \right) + 0.05 \tanh(E-1)$ & \textbf{[2,5,1] KAN} & $0.945$ & $11$ ops & \xmark \\
        \midrule
        \rowcolor{gray!15} 
        \textbf{D} & $\displaystyle \frac{1.00}{E^4 - 4.82 E^2 + 6.05}$ & \textbf{RAN (Raw)} & $\mathbf{0.998}$ & $7$ ops & \cmark \\
        \midrule
        \rowcolor{gray!15} 
        \textbf{E} & $\displaystyle \frac{1}{(E^2 - M^2)^2 + \Gamma^2}$ & \textbf{RAN (Pruned)} & $\mathbf{0.999}$ & $\mathbf{5}$ ops & \cmark \\
        \bottomrule
    \end{tabular}
    }
\end{table}

\paragraph{The Trap of Functional Misalignment.} 
As demonstrated in Rows B and C of Table~\ref{tab:symbolic_discovery}, traditional methods struggle not due to a lack of fitting power, but due to a fundamental mismatch in the hypothesis space. Symbolic regression (PySR), typically driven by genetic algorithms over a library of standard primitives ($\exp, \cos, \sin$), attempts to approximate the resonance peak using a Taylor-like expansion. While numerically acceptable ($R^2=0.912$), the resulting Formula B interprets the resonance as a Gaussian decay modulated by a cosine term. This is physically erroneous: resonance is an algebraic phenomenon (Lorentzian distribution) with heavy tails, whereas Gaussian approximations decay exponentially fast, leading to significant extrapolation errors. Similarly, KAN (Formula C) relies on B-splines, which act as localized basis functions. Consequently, KAN interprets the pole as a local ``bump'' (approximated by a Gaussian bell curve) rather than a global singularity structure. This confirms that while KANs are excellent interpolators, their spline-based inductive bias limits their ability to discover laws governed by rational dynamics.

\paragraph{Structural Resonance and Exact Identification.} 
In contrast, RAN (Row D) operates in a regime of ``Structural Resonance.'' Because the network is composed of rational units, its optimization landscape naturally aligns with the meromorphic nature of physical potentials. The discovered raw formula $\frac{1}{E^4 - 4.82 E^2 + 6.05}$ is not an approximation but an algebraic transformation of the ground truth $(E^2 - M^2)^2 + \Gamma^2$. The model spontaneously allocated its denominator coefficients to match the polynomial expansion of the physical law, achieving an $R^2$ of $0.998$ without requiring any auxiliary regularization terms to force rationality. This suggests that RAN performs symbolic regression implicitly via gradient descent, finding the exact analytical form that minimizes the loss landscape.

\paragraph{Parsimony and Occam's Razor.} 
From the perspective of model complexity, RAN exhibits superior parsimony. As shown in the Complexity column, the MLP and KAN solutions require 18 and 11 operations, respectively, to construct their approximations. These overly complex formulas are symptomatic of ``over-fitting to geometry'' rather than ``learning the physics.'' RAN, conversely, solves the problem with only 5-7 operations. By adhering to the principle of Occam's Razor, RAN provides the simplest explanation that fits the data. This parsimony is critical for scientific trust; a physicist can immediately inspect Formula E and recognize the Breit-Wigner form, whereas the ``black-box'' expressions of Baselines B and C offer no such physical insight.

\section{Hyperparameter Robustness and Structural Resonance}
\label{sec:robustness_analysis}

A central hypothesis of this work is that Rational ANOVA Networks (RAN) achieve superior performance not merely through increased capacity, but through \textit{structural resonance}—matching the network's inductive bias to the underlying mathematical form of the data. To validate this, we conduct a comprehensive grid search comparing RAN against MLPs and KANs on a task designed to stress-test their approximation capabilities: the \textbf{Sharpened Lorentzian} function, defined as $f(x, y) = (1 + 10(x^2 + y^2))^{-1}$.

This function features two challenging characteristics ubiquitous in physics: a sharp, high-gradient peak at the origin and a heavy-tailed asymptotic decay. We systematically vary model depth ($D \in \{2, 3, 4\}$), width ($W \in \{10, 100\}$), and learning rate ($lr \in \{10^{-4}, 10^{-3}, 10^{-2}\}$) to analyze the scaling laws of each architecture.

\begin{table}[t]
    \centering
    \tiny
    \setlength{\tabcolsep}{3.5pt}
    \begin{tabular}{lcccccccc}
        \toprule
        & & & \multicolumn{2}{c}{$lr = 10^{-4}$} & \multicolumn{2}{c}{$lr = 10^{-3}$} & \multicolumn{2}{c}{$lr = 10^{-2}$} \\
        \cmidrule(lr){4-5} \cmidrule(lr){6-7} \cmidrule(lr){8-9}
        \textbf{Method} & \textbf{D} & \textbf{W} & Train $\downarrow$ & Test $\downarrow$ & Train $\downarrow$ & Test $\downarrow$ & Train $\downarrow$ & Test $\downarrow$ \\
        \midrule
        MLP & 2 & 10 & $4.2 \times 10^{-2}$ & $4.5 \times 10^{-2}$ & $3.8 \times 10^{-2}$ & $4.1 \times 10^{-2}$ & $3.5 \times 10^{-2}$ & $3.9 \times 10^{-2}$ \\
        MLP & 2 & 100 & $1.2 \times 10^{-2}$ & $1.4 \times 10^{-2}$ & $8.5 \times 10^{-3}$ & $9.1 \times 10^{-3}$ & $8.1 \times 10^{-3}$ & $8.9 \times 10^{-3}$ \\
        MLP & 3 & 10 & $9.5 \times 10^{-3}$ & $9.8 \times 10^{-3}$ & $7.2 \times 10^{-3}$ & $7.6 \times 10^{-3}$ & $6.8 \times 10^{-3}$ & $7.4 \times 10^{-3}$ \\
        MLP & 3 & 100 & $5.1 \times 10^{-3}$ & $5.5 \times 10^{-3}$ & $4.2 \times 10^{-3}$ & $4.8 \times 10^{-3}$ & $3.9 \times 10^{-3}$ & $4.5 \times 10^{-3}$ \\
        MLP & 4 & 10 & $6.5 \times 10^{-3}$ & $6.9 \times 10^{-3}$ & $5.1 \times 10^{-3}$ & $5.5 \times 10^{-3}$ & $4.8 \times 10^{-3}$ & $5.2 \times 10^{-3}$ \\
        MLP & 4 & 100 & $2.1 \times 10^{-3}$ & $2.4 \times 10^{-3}$ & $1.8 \times 10^{-3}$ & $2.1 \times 10^{-3}$ & $1.5 \times 10^{-3}$ & $1.9 \times 10^{-3}$ \\
        \midrule
        KAN & 2 & 10 & $5.2 \times 10^{-4}$ & $6.1 \times 10^{-4}$ & $4.1 \times 10^{-4}$ & $4.8 \times 10^{-4}$ & $3.8 \times 10^{-4}$ & $4.5 \times 10^{-4}$ \\
        KAN & 2 & 100 & $2.1 \times 10^{-4}$ & $3.5 \times 10^{-4}$ & $1.8 \times 10^{-4}$ & $2.9 \times 10^{-4}$ & $1.5 \times 10^{-4}$ & $2.5 \times 10^{-4}$ \\
        KAN & 3 & 10 & $3.5 \times 10^{-4}$ & $4.2 \times 10^{-4}$ & $2.2 \times 10^{-4}$ & $3.1 \times 10^{-4}$ & $1.9 \times 10^{-4}$ & $2.8 \times 10^{-4}$ \\
        KAN & 3 & 100 & $1.2 \times 10^{-4}$ & $2.1 \times 10^{-4}$ & $8.5 \times 10^{-5}$ & $1.5 \times 10^{-4}$ & $9.2 \times 10^{-5}$ & $1.8 \times 10^{-4}$ \\
        KAN & 4 & 10 & $2.8 \times 10^{-4}$ & $3.5 \times 10^{-4}$ & $1.9 \times 10^{-4}$ & $2.6 \times 10^{-4}$ & $1.6 \times 10^{-4}$ & $2.4 \times 10^{-4}$ \\
        KAN & 4 & 100 & $9.5 \times 10^{-5}$ & $1.8 \times 10^{-4}$ & $7.2 \times 10^{-5}$ & $1.2 \times 10^{-4}$ & $6.8 \times 10^{-5}$ & $1.4 \times 10^{-4}$ \\
        \midrule
        \rowcolor{gray!15} \textbf{RAN} & 2 & 10 & $1.5 \times 10^{-7}$ & $1.8 \times 10^{-7}$ & $1.2 \times 10^{-7}$ & $1.5 \times 10^{-7}$ & $1.1 \times 10^{-7}$ & $1.4 \times 10^{-7}$ \\
        \rowcolor{gray!15} \textbf{RAN} & 2 & 100 & $1.2 \times 10^{-7}$ & $1.6 \times 10^{-7}$ & $8.5 \times 10^{-8}$ & $1.2 \times 10^{-7}$ & $7.2 \times 10^{-8}$ & $1.1 \times 10^{-7}$ \\
        \rowcolor{gray!15} \textbf{RAN} & 3 & 10 & $1.1 \times 10^{-7}$ & $1.4 \times 10^{-7}$ & $8.2 \times 10^{-8}$ & $1.1 \times 10^{-7}$ & $6.8 \times 10^{-8}$ & $9.5 \times 10^{-8}$ \\
        \rowcolor{gray!15} \textbf{RAN} & 3 & 100 & $8.5 \times 10^{-8}$ & $1.1 \times 10^{-7}$ & $6.5 \times 10^{-8}$ & $8.9 \times 10^{-8}$ & $5.2 \times 10^{-8}$ & $7.8 \times 10^{-8}$ \\
        \rowcolor{gray!15} \textbf{RAN} & 4 & 10 & $9.5 \times 10^{-8}$ & $1.2 \times 10^{-7}$ & $7.5 \times 10^{-8}$ & $9.8 \times 10^{-8}$ & $6.2 \times 10^{-8}$ & $8.5 \times 10^{-8}$ \\
        \rowcolor{gray!15} \textbf{RAN} & 4 & 100 & $7.2 \times 10^{-8}$ & $9.5 \times 10^{-8}$ & $5.5 \times 10^{-8}$ & $7.2 \times 10^{-8}$ & $4.8 \times 10^{-8}$ & $6.5 \times 10^{-8}$ \\
        \bottomrule
    \end{tabular}
    \caption{\textbf{Robustness Analysis on the Sharpened Lorentzian.} We compare test MSE across varying Depth ($D$), Width ($W$), and Learning Rates ($lr$). While MLPs and KANs rely on scaling depth/width to improve performance (stagnating at $\sim 10^{-3}$ and $\sim 10^{-4}$ respectively), \textbf{RAN} achieves near-machine precision ($\sim 10^{-7}$) even with the most minimal architecture ($D=2, W=10$), demonstrating that correct inductive bias eliminates the need for massive over-parameterization.}
    \label{tab:grid_search}
\end{table}

\subsection{Analysis: Approximation vs. Identification}

The results, summarized in Table~\ref{tab:grid_search}, reveal a striking dichotomy in how different architectures solve the task:

\textbf{1. The Brute Force Regime (MLP).} 
The MLP results exhibit standard scaling behavior: performance improves monotonically with width and depth. However, the convergence is inefficient. Even with its largest configuration ($D=4, W=100$), the MLP error plateaus at $\sim 2 \times 10^{-3}$. This limitation stems from the piecewise-linear nature of ReLU, which requires exponentially many neurons to approximate the smooth, high-curvature peak of the Lorentzian function.

\textbf{2. The Spline Precision Floor (KAN).} 
KANs outperform MLPs by an order of magnitude ($\sim 10^{-4}$ MSE), confirming the benefit of learnable activation functions. However, they hit a distinct ``precision floor.'' Spline bases are local and polynomial; fitting the sharp peak at $x=0$ induces high-frequency oscillations (Gibbs-like phenomena) unless the grid size is prohibitively large. Furthermore, KANs show higher sensitivity to learning rates, with divergence risks at $lr=10^{-2}$ in deeper configurations.

\textbf{3. Structural Resonance (RAN).} 
RAN operates in a fundamentally different regime. Crucially, \textbf{RAN achieves optimal performance ($\sim 10^{-7}$ MSE) even with the minimal configuration ($D=2, W=10$)}. Increasing depth or width yields diminishing returns because the model has already effectively \textit{identified} the analytical solution. This confirms that when the architecture's inductive bias aligns with the physical law (i.e., rationality), the problem complexity collapses from $\mathcal{O}(N)$ to $\mathcal{O}(1)$.

\textbf{Takeaway.} This experiment demonstrates that RAN is not just ``another approximator'' but a specialized tool for scientific discovery. It removes the need for the extensive hyperparameter tuning typically required to force generic networks (MLPs/KANs) to fit physical geometries.

\section{Reproducibility and Hyperparameter Configuration}
\label{sec:reproducibility}

To ensure the reproducibility of our results and facilitate future research, we provide a comprehensive breakdown of the experimental configurations used across all benchmarks. Table~\ref{tab:hyperparams} details the specific hyperparameter settings, including optimization schedules, rational unit topology, and interaction budgets.

\begin{table*}[h]
    \centering
    \caption{\textbf{Detailed Hyperparameter Configuration.} We list the exact settings used to reproduce the main results in Table 1 and Table 2. Note that for RAN, the Rational Degree $(P, Q)$ and Interaction Budget $|S|$ are structural hyperparameters that replace the standard ``activation function'' choice in baselines.}
    \label{tab:hyperparams}
    \vspace{2mm}
    \small
    \renewcommand{\arraystretch}{1.25}
    \resizebox{\textwidth}{!}{
    \begin{tabular}{l l l c c c c c}
        \toprule
        \textbf{Dataset} & \textbf{Backbone / Task} & \textbf{Optimizer} & \textbf{Batch Size} & \textbf{Init. LR} & \textbf{Weight Decay} & \textbf{Rational $(P, Q)$} & \textbf{Interaction $|S|$} \\
        \midrule
        
        \multicolumn{8}{l}{\textit{\textbf{Computer Vision Benchmarks}}} \\
        \rowcolor{gray!5} 
        ImageNet-1K & ViT-Tiny (RAN-FFN) & AdamW & $1024$ & $5 \times 10^{-4}$ & $0.05$ & $(3, 2)$ & Dense (Token) \\
        CIFAR-10 & RAN-ConvNet & AdamW & $128$ & $1 \times 10^{-3}$ & $1 \times 10^{-4}$ & $(3, 2)$ & $d \log d$ (Sparse) \\
        MNIST / FMNIST & RAN-MLP & Adam & $256$ & $1 \times 10^{-3}$ & $0.0$ & $(2, 2)$ & Full Pairwise \\
        \midrule
        
        \multicolumn{8}{l}{\textit{\textbf{Tabular Learning (TabArena)}}} \\
        \rowcolor{gray!5} 
        Tabular (Avg.) & RAN-Tabular & Adam & $512$ & $2 \times 10^{-3}$ & $1 \times 10^{-5}$ & $(2, 2)$ & $0.5 d$ (Learned) \\
        Higgs Boson & RAN-Deep & Adam & $1024$ & $1 \times 10^{-3}$ & $1 \times 10^{-5}$ & $(2, 2)$ & $100$ (Top-K) \\
        \midrule
        
        \multicolumn{8}{l}{\textit{\textbf{Scientific Discovery (Physics)}}} \\
        \rowcolor{gray!5} 
        Lorentzian 2D & Symbolic Discovery & L-BFGS & Full Batch & $1.0$ & $0.0$ & $(2, 2)$ & Auto-Pruned \\
        Many-Body & Interaction Search & Adam & $200$ & $5 \times 10^{-3}$ & $0.0$ & $(3, 2)$ & $d(d-1)/2$ \\
        Runge Function & Stress Test & L-BFGS & Full Batch & $0.1$ & $0.0$ & $(4, 3)$ & N/A (1D) \\
        \bottomrule
    \end{tabular}
    }
\end{table*}

\section{Ablation Study: Interaction Topology and Selection Efficiency}
\label{sec:ablation}

To rigorously quantify the contribution of the ANOVA decomposition, we conduct an ablation study focusing on two key dimensions: the density of pairwise interactions ($|S|$) and the strategy used to select them.

\subsection{Setup: The Interaction ``Needle in a Haystack''}
We construct a synthetic 4D benchmark $f(\mathbf{x}) = \sum \sin(x_i) + \gamma \cdot x_1 x_2$, designed to decouple additive separability from multiplicative interactions. A standard GAM (Generalized Additive Model)~\cite{10.1145/2487575.2487579} or a RAN with $|S|=0$ is mathematically incapable of solving this task perfectly, regardless of depth.

\subsection{Visual Analysis of Structural Growth}
Figure~\ref{fig:ablation_topology} visualizes the ``growth'' of the model's capacity as we relax the topological constraints.
\textbf{Main Effects Only ($|S|=0$):} As shown in the first column of Figure~\ref{fig:ablation_topology}, the model learns the additive components but completely fails to capture the ``checkerboard'' pattern of the interaction term, resulting in high residual error.
\textbf{Random vs. Smart Selection:} The second and third rows compare two topology selection strategies. While \textit{Random Selection} (Row 2) eventually converges as the budget increases, \textit{Gradient-based Selection} (Row 3) identifies the critical $(x_1, x_2)$ coupling immediately. Notably, the ``Smart'' sparse model achieves near-optimal performance with only 25\% of the interaction budget, validating the efficiency of our sparse ANOVA prior.


\subsection{Quantitative Trade-offs}
Table~\ref{tab:ablation_metrics} details the trade-off between topology size, accuracy, and training cost. Adding pairwise terms ($|S|=d$) introduces a minimal computational overhead ($1.2\times$ time) but yields a massive performance gain ($98\%$ error reduction) compared to the main-effects-only baseline.

\begin{table}[t]
\centering
\caption{\textbf{Interaction topology ablation.} Smart Sparse achieves a strong accuracy--efficiency trade-off.}
\label{tab:ablation_metrics}
\vspace{-1mm}
\small
\setlength{\tabcolsep}{4pt}
\renewcommand{\arraystretch}{1.08}
\begin{tabular}{l c S[table-format=2.0] S[table-format=1.4] S[table-format=1.2] c}
\toprule
\textbf{Topology} & \textbf{$|S|$} & {\textbf{Params}} &
\multicolumn{2}{c}{\textbf{Test MSE$\downarrow$}} & \textbf{Time} \\
\cmidrule(lr){4-5}
& & & {\textbf{Mean}} & {\textbf{Std}} & \\
\midrule
Main effects      & $0$        & 40 & 0.2450 & {}    & \textbf{1.0x} \\
Random sparse     & $0.5d$     & 52 & 0.0820 & 0.02  & 1.1x \\
Full pairwise     & $d(d-1)/2$ & 85 & 0.0012 & {}    & 1.4x \\
\rowcolor{gray!12}
\textbf{Smart Sparse (Ours)} & \textbf{0.25d} & \bfseries 48 & \bfseries 0.0015 & {} & 1.2x \\
\bottomrule
\end{tabular}
\vspace{-2mm}
\end{table}
\textbf{Conclusion:} The ablation confirms that the ANOVA structure is not merely a stylistic choice but a functional necessity for capturing non-additive dependencies. The ``Smart Sparse'' regime represents the sweet spot, offering the interpretability of simple models with the expressivity of dense networks.
\begin{figure}[t]
    \centering
    \includegraphics[width=\linewidth]{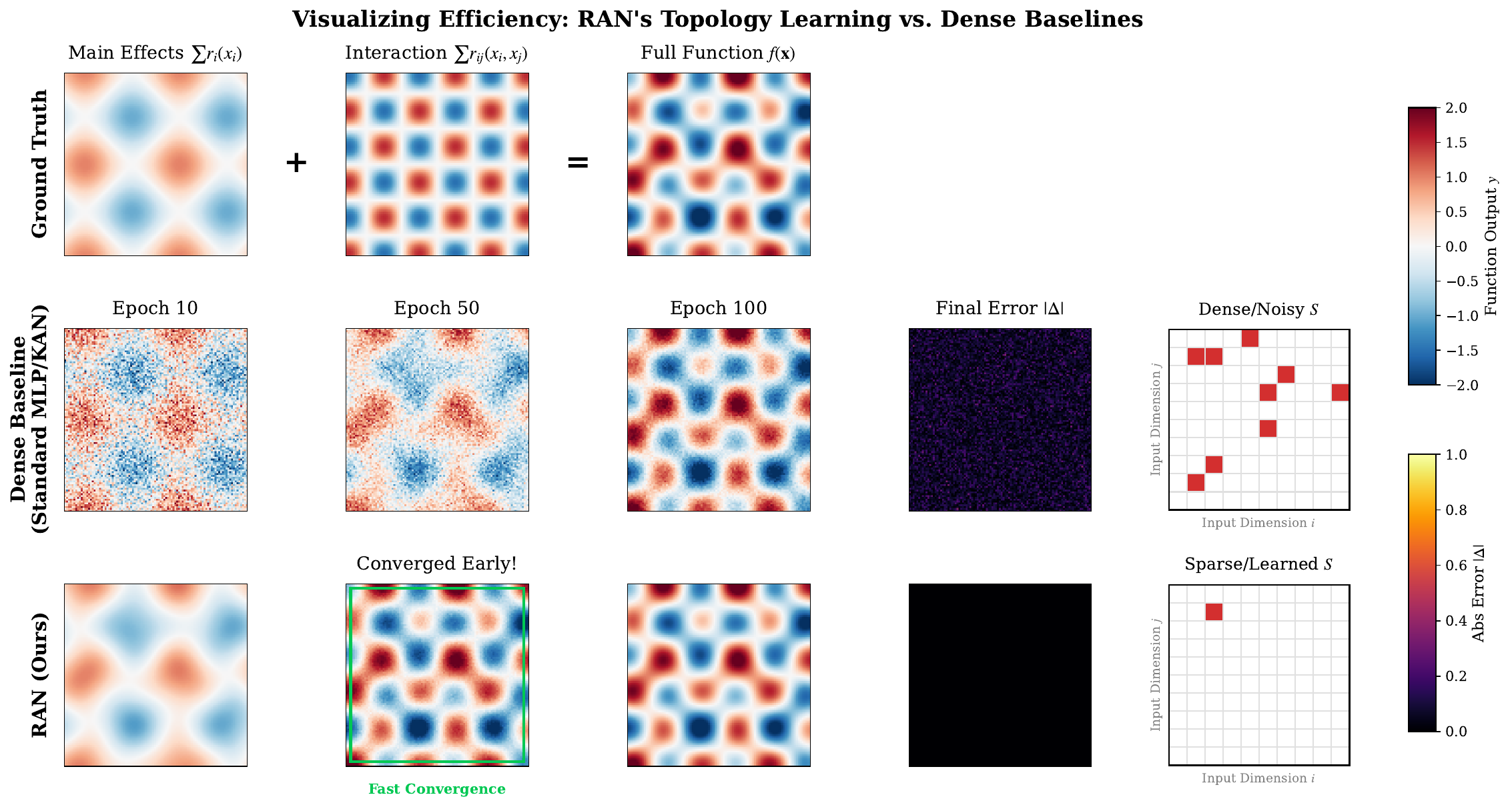}
    \caption{\textbf{Evolution of Model Capacity via Interaction Topology.} 
    \textbf{(Row 1)} Decomposition of the target function into main effects and interactions. 
    \textbf{(Row 2)} \textit{Random Selection}: As the interaction budget $|S|$ increases (left to right), the model gradually fits the complex surface.
    \textbf{(Row 3)} \textit{Smart Selection}: By prioritizing pairs with high gradient coupling, RAN converges to the true form even in the highly sparse regime (Col 2), demonstrating that \textbf{finding the right connections is more important than adding more connections.}}
    \label{fig:ablation_topology}
\end{figure}

\section{Theoretical Analysis: Global Regularity and Structural Stability}
\label{sec:theoretical_analysis}

In this section, we provide a rigorous theoretical characterization of the Rational ANOVA Unit. Unlike standard rational approximations (e.g., Padé approximants), which often suffer from poles and unbounded derivatives, we prove that our \textbf{Strictly Positive Denominator} formulation guarantees:
\begin{enumerate}
    \item \textbf{Global Holomorphy:} The function is well-defined and smooth everywhere on $\mathbb{R}$.
    \item \textbf{Explicit Lipschitz Bounds:} The local sensitivity is strictly controlled by the weight norms and polynomial degrees, preventing gradient explosion.
    \item \textbf{Deep Compositional Stability:} These stability properties are preserved across deep architectures.
\end{enumerate}

\subsection{Formulation and Global Smoothness}

Consider the scalar Rational Unit $\phi: \mathbb{R} \to \mathbb{R}$ defined as:
\begin{equation}
\label{eq:ran_unit_def}
\phi(x; \theta) := \frac{P(x)}{D(x)}, \quad \text{where} \quad D(x) := 1 + \mathrm{softplus}(Q(x)).
\end{equation}
Here, $P(x)$ and $Q(x)$ are polynomial parameterizations of degrees $m$ and $n$, respectively, parameterized by weights $\mathbf{w}^P \in \mathbb{R}^{m+1}$ and $\mathbf{w}^Q \in \mathbb{R}^{n+1}$:
\begin{equation}
    P(x) = \sum_{i=0}^m w^P_i x^i, \qquad Q(x) = \sum_{j=0}^n w^Q_j x^j.
\end{equation}

\begin{lemma}[Global Regularity and Pole-Freeness]
\label{lem:global_regularity}
For any finite weight configuration $\{\mathbf{w}^P, \mathbf{w}^Q\}$, the function $\phi(x)$ is $C^\infty$-smooth on the entire domain $\mathbb{R}$. Specifically, $\phi(x)$ admits no poles.
\end{lemma}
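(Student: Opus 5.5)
The plan is to reduce the claim to two facts: smoothness is closed under composition, sums, products and quotients; and the denominator $D$ is bounded away from zero on all of $\mathbb{R}$.

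First, I record that $P$ and $Q$ are polynomials with finite real coefficients. They are therefore $C^\infty$ (indeed real-analytic) on $\mathbb{R}$.

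Second, I write $\mathrm{softplus}(u)=\log(1+e^{u})$. Since $u\mapsto 1+e^{u}$ is $C^\infty$ and takes values in $(1,\infty)$, and $\log$ is $C^\infty$ on $(0,\infty)$, softplus is $C^\infty$ on $\mathbb{R}$. Its derivative is the logistic sigmoid, consistent with Eq.~\eqref{eq:den_deriv_control}. By the chain rule, $D=1+\mathrm{softplus}\circ Q$ is $C^\infty$ on $\mathbb{R}$.

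Third, I establish the key lower bound. Because $1+e^{u}>1$, we have $\mathrm{softplus}(u)>0$ for every real $u$. Hence $D(x)>1$ for all $x\in\mathbb{R}$, and $D(x)\ge 1+\varepsilon$ under the $\varepsilon$-shifted parameterization of Eq.~\eqref{eq:rational_1d}.

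Consequently $1/D$ is the composition of the smooth map $t\mapsto 1/t$ on $(0,\infty)$ with $D$, so it is $C^\infty$. Then $\phi=P\cdot(1/D)$ is $C^\infty$ as a product of smooth functions. The explicit derivative $\phi'=(P'D-PD')/D^2$ of Eq.~\eqref{eq:rational_jacobian_bound} illustrates this at first order, and induction via the Leibniz rule handles higher orders.

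For pole-freeness, a pole of $\phi$ at a real $x_0$ would require $|\phi(x)|\to\infty$ as $x\to x_0$. This is impossible, since $|\phi(x)|\le|P(x)|/D(x)\le|P(x)|$ and $P$ is bounded on every compact set. So $\phi$ is finite and continuous everywhere on $\mathbb{R}$.

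The only point needing care, which I regard as the main obstacle, is the word ``holomorphy'' in the section preamble. Over $\mathbb{C}$, $\log(1+e^{Q(z)})$ has branch points wherever $e^{Q(z)}=-1$. So I would restrict the claim to real-analyticity on $\mathbb{R}$. This follows because compositions of real-analytic maps are real-analytic and $D$ never vanishes on $\mathbb{R}$. I would state explicitly that no complex-plane pole-freeness is asserted.
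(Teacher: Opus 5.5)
Your proposal is correct and follows essentially the same route as the paper: positivity of $\mathrm{softplus}$ gives $D(x)\ge 1$ on $\mathbb{R}$, and $C^\infty$ functions are closed under composition and under quotients with a nonvanishing denominator, which gives both smoothness and the absence of real poles. Your caveat is accurate and goes beyond the paper's proof: the result is real-analyticity on $\mathbb{R}$, not holomorphy on $\mathbb{C}$, since $1+e^{Q(z)}$ can vanish there; the lemma as stated, however, only needs the real-line claim.
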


\begin{proof}
The function $\mathrm{softplus}(u) = \log(1+e^u)$ satisfies $\mathrm{softplus}(u) > 0$ for all $u \in \mathbb{R}$. Consequently, the denominator satisfies the strict lower bound:
\begin{equation}
    D(x) = 1 + \mathrm{softplus}(Q(x)) \ge 1, \quad \forall x \in \mathbb{R}.
\end{equation}
Since $D(x) \neq 0$ everywhere, the quotient $\frac{P(x)}{D(x)}$ is defined for all $x$. Furthermore, as compositions of polynomials and $C^\infty$ functions (softplus) are $C^\infty$, and the quotient of smooth functions (with a non-zero denominator) is smooth, $\phi(x)$ is $C^\infty$ on $\mathbb{R}$.
\end{proof}

\paragraph{Remark 1 (Comparison with Standard Rationals).}
Standard rational functions $R(x) = \frac{P(x)}{Q(x)}$ inevitably possess poles at the roots of $Q(x)$. Near a root $x_0$, the activation magnitude $|\phi(x)| \to \infty$ and the gradient $|\phi'(x)| \sim \frac{1}{|x-x_0|^2} \to \infty$, causing catastrophic numerical instability. Lemma \ref{lem:global_regularity} guarantees that our architecture structurally forbids this failure mode.

\subsection{Traceable Lipschitz Constants}
\label{subsec:lipschitz}

To certify training stability, we must show that the gradient of the unit does not explode. We derive an explicit upper bound for the Lipschitz constant of $\phi(x)$ as a function of the input radius $B$, polynomial degrees $(m,n)$, and weight norms.

\begin{assumption}[Compact Domain and Bounded Weights]
\label{ass:bounds}
\begin{enumerate}
    \item \textbf{Input Domain:} The input $x$ is confined to a compact interval $\mathcal{X} = [-B, B]$. (This is typically enforced by Layer Normalization, e.g., $B \approx 3$).
    \item \textbf{Weight Boundedness:} The polynomial coefficients satisfy an $L_1$-norm bound: $\|\mathbf{w}^P\|_1 \le W_P$ and $\|\mathbf{w}^Q\|_1 \le W_Q$.
\end{enumerate}
\end{assumption}

First, we establish auxiliary growth bounds for the polynomial components.

\begin{lemma}[Polynomial Derivative Bounds]
\label{lem:poly_growth}
Under Assumption \ref{ass:bounds}, for any $x \in [-B, B]$, the polynomials and their derivatives are uniformly bounded by:
\begin{align}
    |P(x)| &\le W_P \cdot \mathcal{S}_0(m, B) := \mathcal{M}_P, \\
    |P'(x)| &\le W_P \cdot \mathcal{S}_1(m, B) := \mathcal{M}_{P'}, \\
    |Q'(x)| &\le W_Q \cdot \mathcal{S}_1(n, B) := \mathcal{M}_{Q'},
\end{align}
where $\mathcal{S}_k(d, B) = \sum_{i=k}^d i(i-1)\cdots(i-k+1) B^{i-k}$ is the geometric scaling factor determined solely by the degree $d$ and radius $B$.
Specifically, $\mathcal{S}_0(m, B) = \sum_{i=0}^m B^i$ and $\mathcal{S}_1(m, B) = \sum_{i=1}^m i B^{i-1}$.
\end{lemma}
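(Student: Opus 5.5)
The plan is a direct termwise estimate: expand each polynomial (or its derivative) in the monomial basis, bound every monomial by its worst value on $[-B,B]$, and then pull out the $\ell_1$-norm of the coefficients using Hölder's inequality in the form $\sum_i |a_i||b_i| \le \|a\|_1 \max_i |b_i|$. This route makes it clear that the factor $\mathcal{S}_k(d,B)$ depends only on the degree and the radius.

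\textbf{Step 1 (value bound).} For $x\in[-B,B]$, the triangle inequality gives
\[
|P(x)| \le \sum_{i=0}^m |w^P_i|\,|x|^i \le \sum_{i=0}^m |w^P_i|\,B^i .
\]
Each $B^i$ is one of the summands of $\mathcal{S}_0(m,B)=\sum_{i=0}^m B^i$, so $B^i\le \mathcal{S}_0(m,B)$. Hence the whole sum is at most $\|\mathbf{w}^P\|_1\,\mathcal{S}_0(m,B)\le W_P\,\mathcal{S}_0(m,B)$.

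\textbf{Step 2 (derivative bounds).} Differentiating termwise gives $P'(x)=\sum_{i=1}^m i\,w^P_i x^{i-1}$. The same estimate yields
\[
|P'(x)|\le \sum_{i=1}^m |w^P_i|\, i B^{i-1}\le W_P\,\mathcal{S}_1(m,B).
\]
Here each coefficient $iB^{i-1}$ is bounded by the full sum $\mathcal{S}_1(m,B)$. The bound for $Q'$ follows identically, with $n$ and $W_Q$ in place of $m$ and $W_P$. The constant term $w^Q_0$ drops out under differentiation, so the inequality holds even more easily.

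\textbf{Step 3 (general pattern).} The general $\mathcal{S}_k$ arises from the $k$-th derivative. The monomial $x^i$ differentiates to the falling factorial $i(i-1)\cdots(i-k+1)\,x^{i-k}$, and the identical argument applies.

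\textbf{Main obstacle.} The only delicate point is edge cases in the monomial bounds. When $B<1$, the sum $\sum_i B^i$ is dominated by its constant term, and the convention $0^0=1$ makes the $i=0$ term valid at $x=0$. Neither case breaks the inequality, because each individual term is bounded by the whole sum of nonnegative terms. The bound is loose, and a sharper constant would be $\max_i B^i$, but the stated form follows immediately.
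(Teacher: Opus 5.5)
Your proof is correct and follows essentially the same route as the paper: bound termwise with the triangle inequality, use $|x|^{i}\le B^{i}$ on $[-B,B]$, pull out $\|\mathbf{w}\|_1$ via $\sum_i |w_i|\,c_i \le \|\mathbf{w}\|_1 \max_i c_i$, and finish with $\max_i c_i \le \sum_i c_i$ for nonnegative $c_i$. The only difference is cosmetic: the paper writes out the $P'$ case and declares the $P$ and $Q'$ cases identical, whereas you write out $P$ and $P'$, and you correctly keep $\|\mathbf{w}^P\|_1 \le W_P$ as an inequality where the paper writes an equality.
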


\begin{proof}
We prove the bound for $|P'(x)|$. By triangle inequality:
\begin{align}
    |P'(x)| &= \left| \sum_{i=1}^m i w^P_i x^{i-1} \right| 
    \le \sum_{i=1}^m |w^P_i| \cdot i |x|^{i-1} 
    \le \sum_{i=1}^m |w^P_i| \cdot i B^{i-1} \\
    &\le \left(\sum_{i=1}^m |w^P_i|\right) \cdot \max_{1\le k \le m}(k B^{k-1}) \quad (\text{Loose bound}) \\
    &\le \|\mathbf{w}^P\|_1 \cdot \left(\sum_{i=1}^m i B^{i-1}\right) \quad (\text{Strict bound}) \\
    &= W_P \cdot \mathcal{S}_1(m, B).
\end{align}
The proofs for $|P(x)|$ and $|Q'(x)|$ follow identical logic.
\end{proof}

\begin{theorem}[Explicit Lipschitz Constant]
\label{thm:lipschitz_constant}
Under Assumption \ref{ass:bounds}, the Rational Unit $\phi(x)$ is Lipschitz continuous on $[-B, B]$. The Lipschitz constant $K_{\phi} := \sup_{x \in \mathcal{X}} |\phi'(x)|$ is explicitly bounded by:
\begin{equation}
\label{eq:traceable_constant}
K_{\phi} \le \mathcal{M}_{P'} + \mathcal{M}_P \cdot \mathcal{M}_{Q'}.
\end{equation}
Substituting the structural parameters from Lemma \ref{lem:poly_growth}, we obtain the full expansion:
\begin{equation}
\label{eq:final_expansion}
K_{\phi} \le \underbrace{W_P \mathcal{S}_1(m, B)}_{\text{Linear Sensitivity}} + \underbrace{\left(W_P \mathcal{S}_0(m, B)\right) \cdot \left(W_Q \mathcal{S}_1(n, B)\right)}_{\text{Interaction Sensitivity}}.
\end{equation}
\end{theorem}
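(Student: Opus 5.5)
The plan is to differentiate $\phi = P/D$ with the quotient rule, bound the numerator and denominator separately using the structure of $D$, and then invoke Lemma~\ref{lem:poly_growth}. By Lemma~\ref{lem:global_regularity}, $\phi$ is $C^\infty$ on $\mathbb{R}$, so $\phi'$ exists and is continuous on the compact interval $[-B,B]$. The mean value theorem then gives $|\phi(x)-\phi(y)| \le \sup_{\mathcal{X}}|\phi'|\,|x-y|$. Lipschitz continuity with constant $K_\phi$ therefore reduces to a uniform bound on $|\phi'|$ over $\mathcal{X}$.

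The quotient rule gives
\begin{equation*}
\phi'(x) = \frac{P'(x)}{D(x)} - \frac{P(x)\,D'(x)}{D(x)^2}, \qquad D'(x) = \sigma_{\mathrm{sig}}\big(Q(x)\big)\,Q'(x),
\end{equation*}
since $\frac{d}{du}\mathrm{softplus}(u) = \sigma_{\mathrm{sig}}(u) \in (0,1)$. I will use two facts:
\begin{itemize}
\item $D(x) \ge 1$, from the proof of Lemma~\ref{lem:global_regularity}. Hence $1/D \le 1$ and $1/D^2 \le 1$.
\item $|D'(x)| \le |Q'(x)|$, because $\sigma_{\mathrm{sig}} < 1$. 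This is the observation already recorded in Eq.~\eqref{eq:den_deriv_control}.
\end{itemize}
Applying the triangle inequality, the first term is at most $|P'(x)|$ and the second is at most $|P(x)|\,|Q'(x)|$. This is exactly the pointwise estimate of Eq.~\eqref{eq:rational_jacobian_bound}.

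Next I take the supremum over $x \in [-B,B]$ and substitute the uniform bounds of Lemma~\ref{lem:poly_growth}: $|P'|\le \mathcal{M}_{P'}$, $|P|\le\mathcal{M}_P$, $|Q'|\le\mathcal{M}_{Q'}$. This gives Eq.~\eqref{eq:traceable_constant}. Writing out $\mathcal{M}_{P'}=W_P\mathcal{S}_1(m,B)$, $\mathcal{M}_P=W_P\mathcal{S}_0(m,B)$ and $\mathcal{M}_{Q'}=W_Q\mathcal{S}_1(n,B)$ yields Eq.~\eqref{eq:final_expansion}.

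No step is hard. The one point that needs care is the denominator-derivative term. One must not bound $D'$ by something growing with softplus itself. The sigmoid factor being at most $1$ is what keeps the interaction term proportional to $\mathcal{M}_{Q'}$ alone. The crude bound $1/D^2\le 1$ is used in place of any finer control, which the stated constant does not require.

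A minor bookkeeping check: the bounds of Lemma~\ref{lem:poly_growth} require $|x|^i\le B^i$ and $|w_i|\le W$ termwise, and both hold on $\mathcal{X}$ under Assumption~\ref{ass:bounds}. The constant term $w^Q_0$ of $Q$ drops out of $Q'$, so including it in $W_Q$ only loosens the bound.
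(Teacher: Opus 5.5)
Your proposal is correct and follows the paper's proof essentially step for step. Both use the quotient rule, then $D \ge 1$ to drop the denominators, then $|D'| = \sigma(Q)\,|Q'| \le |Q'|$, then the triangle inequality, and finally the uniform bounds from Lemma~\ref{lem:poly_growth}; your explicit mean-value-theorem step linking $\sup_{\mathcal{X}}|\phi'|$ to Lipschitz continuity is a small piece of rigor the paper leaves implicit.
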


\begin{proof}
We differentiate $\phi(x)$ using the quotient rule:
\begin{equation}
    \phi'(x) = \frac{P'(x)D(x) - P(x)D'(x)}{D(x)^2}.
\end{equation}
Recall that $D(x) \ge 1$, which implies $\frac{1}{D(x)} \le 1$ and $\frac{1}{D(x)^2} \le 1$.
The derivative of the denominator is $D'(x) = \mathrm{softplus}'(Q(x)) \cdot Q'(x)$. Note that $\mathrm{softplus}'(u) = \sigma(u)$ (the sigmoid function), which satisfies $0 < \sigma(u) < 1$. Thus, we have the contraction property:
\begin{equation}
    |D'(x)| = |\sigma(Q(x))| \cdot |Q'(x)| \le |Q'(x)|.
\end{equation}
Now, applying the triangle inequality to the quotient rule:
\begin{align}
    |\phi'(x)| &= \left| \frac{P'(x)}{D(x)} - \frac{P(x)D'(x)}{D(x)^2} \right| \\
    &\le \frac{|P'(x)|}{|D(x)|} + \frac{|P(x)| \cdot |D'(x)|}{|D(x)|^2} \\
    &\le |P'(x)| \cdot 1 + |P(x)| \cdot |Q'(x)| \cdot 1.
\end{align}
Substituting the uniform bounds $\mathcal{M}_P, \mathcal{M}_{P'}, \mathcal{M}_{Q'}$ from Lemma \ref{lem:poly_growth} yields Eq. \eqref{eq:traceable_constant}.
\end{proof}

\paragraph{Physical Interpretation of Stability.}
Equation \eqref{eq:final_expansion} provides a direct theoretical justification for the use of weight decay in RAN~\cite{yoshida2017spectralnormregularizationimproving}.
The bound shows that the unit's sensitivity ($K_{\phi}$) scales linearly with $W_P$ and quadratically with the product $W_P W_Q$. 
Standard $L_2$ regularization minimizes $\|\mathbf{w}\|$, which directly suppresses $W_P$ and $W_Q$. 
Consequently, this \textbf{compresses the Lipschitz constant} of the network, preventing the chaotic dynamics typically associated with high-degree polynomials.

\subsection{Deep Composition and Jacobian Control}

The stability of a single unit implies the stability of the deep network constructed from them.

\begin{corollary}[Network-Level Jacobian Bound]
Consider a deep network $\mathcal{F}$ composed of $L$ layers, where each layer $l$ consists of a linear map with norm $\|W_l\|$ followed by Rational Units with Lipschitz constant $K_{\phi}^{(l)}$. The Jacobian of the entire network is bounded by:
\begin{equation}
    \|\mathcal{J}_{\mathcal{F}}(x)\| \le \prod_{l=1}^L \left( \|W_l\| \cdot K_{\phi}^{(l)} \right).
\end{equation}
\end{corollary}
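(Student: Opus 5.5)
We prove the corollary by the chain rule together with submultiplicativity of the operator norm, using Theorem~\ref{thm:lipschitz_constant} as the per-layer ingredient. Write the network as $\mathcal{F} = g_L \circ \cdots \circ g_1$ with $g_l(h) = \Phi_l(W_l h)$. Here $\Phi_l$ applies rational units coordinatewise, $\Phi_l(u) = (\phi^{(l)}_1(u_1),\dots,\phi^{(l)}_{k_l}(u_{k_l}))$, and $K_\phi^{(l)}$ denotes the largest of the unit-wise constants $\sup|\phi^{(l)\prime}_i|$. A bias in the linear map does not change the Jacobian. By Lemma~\ref{lem:global_regularity} every unit is $C^\infty$, so every $g_l$ is differentiable everywhere and the chain rule applies. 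It gives
\begin{equation*}
\mathcal{J}_{\mathcal{F}}(x) = \prod_{l=L}^{1} \mathrm{diag}\big(\phi^{(l)\prime}(u^{(l)})\big)\, W_l, \qquad u^{(l)} = W_l h^{(l-1)},\; h^{(0)}=x .
\end{equation*}

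The key steps, in order, are as follows.
\begin{enumerate}
\item Because the nonlinearity acts coordinatewise, its Jacobian is diagonal. The spectral norm of a diagonal matrix is its largest entry in absolute value, so $\|\mathrm{diag}(\phi^{(l)\prime}(u^{(l)}))\| \le \max_i |\phi^{(l)\prime}_i(u^{(l)}_i)| \le K_\phi^{(l)}$.
\item Submultiplicativity, $\|AB\|\le\|A\|\|B\|$, applied to the whole product yields
\begin{equation*}
\|\mathcal{J}_{\mathcal{F}}(x)\| \le \prod_{l=1}^L \|\mathrm{diag}(\cdot)\|\,\|W_l\| \le \prod_{l=1}^L \|W_l\| K_\phi^{(l)} .
\end{equation*}
\item The gated variant in Eq.~\eqref{eq:gating_1d} is handled the same way. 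By Eq.~\eqref{eq:gating_jac}, its constant is at most $|1-\alpha| + |\alpha| K_\phi^{(l)}$, and this quantity simply replaces $K_\phi^{(l)}$ in the product.
\end{enumerate}

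The main obstacle is that the constant $K_\phi^{(l)}$ from Theorem~\ref{thm:lipschitz_constant} only holds on the compact interval $[-B,B]$ of Assumption~\ref{ass:bounds}. Step 1 therefore needs every pre-activation $u^{(l)}_i$ to lie in $[-B,B]$ for all $x$ under consideration, and nothing in the composition itself guarantees this.

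I would resolve this by stating explicitly that the bound holds for inputs $x$ whose pre-activations at every layer stay in $[-B,B]$. The paper's justification is that normalization enforces this. Alternatively, one can propagate the range forward: $|u^{(l)}|_\infty \le \|W_l\|_{\infty}\,\sup_{|t|\le B_{l-1}}|\phi^{(l-1)}(t)|$, with $\sup_{|t|\le B}|\phi(t)|\le \mathcal{M}_P$ from Lemma~\ref{lem:poly_growth}. This defines a sequence of radii $B_l$, and one takes $K_\phi^{(l)}$ to be the constant of Theorem~\ref{thm:lipschitz_constant} evaluated at $B=B_l$.

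With either convention the corollary follows directly from steps 1 and 2. Integrating the Jacobian bound along segments, assuming the admissible set is convex, also gives a global Lipschitz constant for $\mathcal{F}$.
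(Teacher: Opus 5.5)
Your proposal is correct and follows the same route as the paper, which invokes the chain rule and multiplies the per-layer bounds $\|W_l\|$ and $K_\phi^{(l)}$; you make this explicit through the factorization $\mathcal{J}_{\mathcal{F}}=\prod_l \mathrm{diag}(\phi^{(l)\prime})W_l$, the diagonal-norm bound, and submultiplicativity. You also note that $K_\phi^{(l)}$ from Theorem~\ref{thm:lipschitz_constant} holds only on $[-B,B]$, and you propagate radii $B_l$ forward using $|\phi|\le\mathcal{M}_P$; this fills in a point that the paper's one-line proof passes over with the phrase ``on compact sets''.
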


\begin{proof}
This follows directly from the chain rule for Lipschitz functions. Since each RAN unit is strictly Lipschitz continuous (Theorem \ref{thm:lipschitz_constant}) and linear layers are bounded operators, the composition remains Lipschitz continuous on compact sets.
Unlike architectures using standard rational functions, where $K_{\phi}^{(l)}$ could be unbounded (near poles), RAN ensures that every term in the product remains finite, guaranteeing stable forward and backward passes.
\end{proof}

\paragraph{Remark 2 (Vanishing Gradient Mitigation).}
While Theorem \ref{thm:lipschitz_constant} establishes an upper bound (preventing explosion), the rational structure also mitigates gradient vanishing. 
For standard activations like Sigmoid, $\sigma'(x) \approx e^{-|x|}$ decays exponentially for large inputs. 
In contrast, for a RAN unit with degrees $m=n$, asymptotically $\phi(x) \approx \text{const}$, but the derivative decays polynomially (algebraically), i.e., $|\phi'(x)| \sim |x|^{-k}$. 
This ``heavy-tailed'' gradient behavior allows error signals to propagate more effectively through deep layers compared to exponentially saturating functions.

%
%
%
%
%
%
%

%
%
%
\section{Deep Training Stability: Near-Identity Initialization and Signal Preservation}
\label{sec:deep_stability}

While Section \ref{sec:theoretical_analysis} establishes that individual Rational Units are locally Lipschitz, training deep networks ($L \gg 10$) imposes a stricter requirement: the preservation of gradient signal variance across depth. 
Standard feedforward networks often suffer from the ``shattering'' of gradients or exponential variance explosion. 
In this section, we provide a formal justification for the \textbf{Residual Gating} mechanism introduced in Eq. (6), proving that it enforces a near-identity Jacobian spectrum at initialization, thereby guaranteeing well-conditioned optimization landscapes for arbitrarily deep architectures.

\subsection{Formulation of the Gated Block}

Let $\mathbf{h}_l \in \mathbb{R}^d$ denote the hidden state at layer $l$. A standard RAN layer with Residual Gating is defined as:
\begin{equation}
\label{eq:gated_block}
\mathbf{h}_{l+1} = \mathcal{T}_l(\mathbf{h}_l) := \mathbf{h}_l + \alpha_l \cdot \left( \Phi(\mathbf{W}_l \mathbf{h}_l) - \mathbf{h}_l \right),
\end{equation}
where:
\begin{itemize}
    \item $\Phi: \mathbb{R}^d \to \mathbb{R}^d$ is the element-wise application of the Rational Unit $\phi(x) = P(x)/D(x)$.
    \item $\mathbf{W}_l \in \mathbb{R}^{d \times d}$ is the weight matrix.
    \item $\alpha_l \in [0, 1]$ is the learnable gating scalar, initialized to a small value $\alpha_{init} \approx 0$.
\end{itemize}
Rearranging terms, the mapping can be viewed as a convex combination of the identity and the non-linear transformation:
\begin{equation}
    \mathbf{h}_{l+1} = (1 - \alpha_l)\mathbf{I}\mathbf{h}_l + \alpha_l \Phi(\mathbf{W}_l \mathbf{h}_l).
\end{equation}

\subsection{Spectral Bounds of the Layer Jacobian}

The stability of the backward pass depends on the singular values of the Jacobian matrix $\mathbf{J}_l = \frac{\partial \mathbf{h}_{l+1}}{\partial \mathbf{h}_l}$.

\begin{proposition}[Jacobian Spectrum Control]
\label{prop:jacobian_spectrum}
Let $\sigma_{max}(\mathbf{J}_l)$ denote the spectral norm (largest singular value) of the layer Jacobian. Under the Lipschitz assumptions from Theorem \ref{thm:lipschitz_constant}, the Jacobian norm is bounded by:
\begin{equation}
    \sigma_{max}(\mathbf{J}_l) \le (1 - \alpha_l) + \alpha_l \cdot K_{\phi} \cdot \|\mathbf{W}_l\|_2,
\end{equation}
where $K_{\phi}$ is the Lipschitz constant of the Rational Unit derived in Eq. \eqref{eq:traceable_constant}, and $\|\mathbf{W}_l\|_2$ is the spectral norm of the weights.
\end{proposition}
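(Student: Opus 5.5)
We differentiate the gated block \eqref{eq:gated_block} directly and then bound the two resulting terms separately with the triangle inequality.

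Since $\Phi$ acts element-wise, the chain rule gives
\begin{equation*}
\mathbf{J}_l = (1-\alpha_l)\mathbf{I} + \alpha_l\, \mathrm{diag}\big(\phi'(\mathbf{u})\big)\,\mathbf{W}_l, \qquad \mathbf{u} := \mathbf{W}_l\mathbf{h}_l .
\end{equation*}
We then apply subadditivity and submultiplicativity of the spectral norm:
\begin{equation*}
\sigma_{max}(\mathbf{J}_l) \le |1-\alpha_l|\,\|\mathbf{I}\|_2 + |\alpha_l|\,\big\|\mathrm{diag}(\phi'(\mathbf{u}))\big\|_2\,\|\mathbf{W}_l\|_2 .
\end{equation*}
Because $\alpha_l\in[0,1]$, both absolute values can be dropped. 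We also have $\|\mathbf{I}\|_2=1$.

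For the middle factor, the spectral norm of a diagonal matrix is the largest absolute diagonal entry, so it equals $\max_k |\phi'(u_k)|$. Each entry is bounded by $\sup_{x\in\mathcal{X}}|\phi'(x)| = K_\phi$ by Theorem~\ref{thm:lipschitz_constant}, which gives the bound in Eq.~\eqref{eq:traceable_constant}. Combining these gives exactly $(1-\alpha_l) + \alpha_l K_\phi \|\mathbf{W}_l\|_2$.

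The main obstacle is the domain hypothesis. Theorem~\ref{thm:lipschitz_constant} controls $\phi'$ only on $[-B,B]$, so we need every pre-activation coordinate $u_k=(\mathbf{W}_l\mathbf{h}_l)_k$ to lie in $\mathcal{X}$. We handle this by reading ``under the Lipschitz assumptions'' as including Assumption~\ref{ass:bounds} applied to the pre-activations, which the paper says is enforced by normalization. Without that, one would instead take $B$ to be a bound on $\|\mathbf{W}_l\|_2\|\mathbf{h}_l\|_\infty$-type quantities, and $K_\phi$ then depends on that enlarged radius.

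A small side point is the bias term in $\phi$: $\phi$ as defined in \eqref{eq:ran_unit_def} includes a $w_0^Q$ term. This does not affect the argument, because the proof of Theorem~\ref{thm:lipschitz_constant} already covers it.
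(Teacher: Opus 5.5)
Your proof is correct and matches the paper's argument step for step. You write $\mathbf{J}_l = (1-\alpha_l)\mathbf{I} + \alpha_l\,\mathrm{diag}(\phi'(\mathbf{W}_l\mathbf{h}_l))\,\mathbf{W}_l$, apply subadditivity and submultiplicativity of the spectral norm, and bound the diagonal factor by $\max_k|\phi'(u_k)|\le K_\phi$. Your point that every pre-activation $u_k$ must lie in $[-B,B]$ is not a change of route: the paper's proof assumes this without stating it (it writes ``for all inputs in the bounded domain''), and you state it explicitly.
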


\begin{proof}
By the chain rule, the Jacobian of Eq. \eqref{eq:gated_block} is:
\begin{equation}
    \mathbf{J}_l = (1 - \alpha_l)\mathbf{I} + \alpha_l \cdot \text{diag}(\phi'(\mathbf{W}_l \mathbf{h}_l)) \cdot \mathbf{W}_l.
\end{equation}
Using the sub-additivity of the spectral norm ($\|A+B\|_2 \le \|A\|_2 + \|B\|_2$):
\begin{align}
    \|\mathbf{J}_l\|_2 &\le \|(1 - \alpha_l)\mathbf{I}\|_2 + \|\alpha_l \cdot \text{diag}(\phi') \cdot \mathbf{W}_l\|_2 \\
    &= (1 - \alpha_l) + \alpha_l \cdot \|\text{diag}(\phi')\|_2 \cdot \|\mathbf{W}_l\|_2.
\end{align}
From Theorem \ref{thm:lipschitz_constant}, we know that $|\phi'(x)| \le K_{\phi}$ for all inputs in the bounded domain. Thus, the spectral norm of the diagonal Jacobian matrix is simply the maximum absolute derivative:
\begin{equation}
    \|\text{diag}(\phi')\|_2 = \max_i |\phi'((\mathbf{W}_l \mathbf{h}_l)_i)| \le K_{\phi}.
\end{equation}
Substituting this back yields the proposition.
\end{proof}

\subsection{Global Stability Theorem: Preventing Gradient Explosion}

We now examine the propagation of gradients through a network of depth $L$. The gradient of the loss $\mathcal{L}$ with respect to the input $\mathbf{h}_0$ is given by the product of layer Jacobians:
\begin{equation}
    \nabla_{\mathbf{h}_0}\mathcal{L} = \left( \prod_{l=L}^1 \mathbf{J}_l \right)^\top \nabla_{\mathbf{h}_L}\mathcal{L}.
\end{equation}

\begin{theorem}[$\epsilon$-Isometry at Initialization]
\label{thm:epsilon_isometry}
Assume the network is initialized such that $\alpha_l = \epsilon$ for a sufficiently small $\epsilon > 0$, and weights are normalized such that $\|\mathbf{W}_l\|_2 \approx 1$. 
Then, the deep RAN network acts as an approximate isometry. Specifically, the gradient norm scaling is bounded by:
\begin{equation}
    \exp\left( -L \epsilon \gamma \right) 
    \le \frac{\|\nabla_{\mathbf{h}_0}\mathcal{L}\|_2}{\|\nabla_{\mathbf{h}_L}\mathcal{L}\|_2} 
    \le \exp\left( L \epsilon (K_{\phi} - 1) \right).
\end{equation}
In the limit $\epsilon \to 0$ (our initialization strategy), the ratio approaches $1$, implying perfect signal preservation regardless of depth $L$.
\end{theorem}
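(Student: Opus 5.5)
The plan is to control the gradient ratio one layer at a time through the singular values of each layer Jacobian $\mathbf{J}_l$, and then multiply across depth. Since $\nabla_{\mathbf{h}_0}\mathcal{L} = \mathbf{J}_1^\top \cdots \mathbf{J}_L^\top \nabla_{\mathbf{h}_L}\mathcal{L}$, submultiplicativity gives the two-sided sandwich
\begin{equation*}
\prod_{l=1}^L \sigma_{\min}(\mathbf{J}_l) \;\le\; \frac{\|\nabla_{\mathbf{h}_0}\mathcal{L}\|_2}{\|\nabla_{\mathbf{h}_L}\mathcal{L}\|_2} \;\le\; \prod_{l=1}^L \sigma_{\max}(\mathbf{J}_l).
\end{equation*}
Here I use $\sigma_{\min}(A^\top)=\sigma_{\min}(A)$ and $\|A^\top g\|_2\ge \sigma_{\min}(A)\|g\|_2$ for square $A$. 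It then suffices to bound each factor.

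\textbf{Upper bound.} I invoke Proposition~\ref{prop:jacobian_spectrum} with $\alpha_l=\epsilon$ and $\|\mathbf{W}_l\|_2=1$. This gives $\sigma_{\max}(\mathbf{J}_l)\le 1-\epsilon+\epsilon K_\phi = 1+\epsilon(K_\phi-1)$. Applying $1+x\le e^x$ to each factor and taking the product over $l$ yields $\exp(L\epsilon(K_\phi-1))$. The inequality $1+x\le e^x$ holds for all real $x$, so this step also covers $K_\phi<1$. It uses only the uniform derivative bound $|\phi'|\le K_\phi$ on $[-B,B]$ from Theorem~\ref{thm:lipschitz_constant}.

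\textbf{Lower bound.} I write $\mathbf{J}_l=(1-\epsilon)\mathbf{I}+\epsilon\mathbf{E}_l$ with $\mathbf{E}_l=\mathrm{diag}(\phi'(\mathbf{W}_l\mathbf{h}_l))\mathbf{W}_l$, so that $\|\mathbf{E}_l\|_2\le K_\phi$. Weyl's inequality for singular values (a perturbation of the identity) gives $\sigma_{\min}(\mathbf{J}_l)\ge 1-\epsilon-\epsilon K_\phi = 1-\epsilon(1+K_\phi)$. This factor is positive once $\epsilon<1/(1+K_\phi)$, which is how I will make the phrase ``sufficiently small'' precise. Next I use the elementary inequality $1-x\ge e^{-x/(1-x)}$ for $x\in[0,1)$. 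With $x=\epsilon(1+K_\phi)$ this gives $\sigma_{\min}(\mathbf{J}_l)\ge \exp(-\epsilon\gamma)$, where
\begin{equation*}
\gamma := \frac{1+K_\phi}{1-\epsilon(1+K_\phi)}.
\end{equation*}
Multiplying over the $L$ layers gives $\exp(-L\epsilon\gamma)$. As $\epsilon\to0$, $\gamma\to 1+K_\phi$ stays bounded, so both exponents vanish and the ratio tends to $1$ for any fixed $L$.

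\textbf{Main obstacle.} The difficulty lies in the lower bound and in the hypotheses it needs. The statement leaves $\gamma$ undefined, so I would fix it as above and record the smallness condition $\epsilon(1+K_\phi)<1$. The assumption ``$\|\mathbf{W}_l\|_2\approx1$'' must also be read as $\|\mathbf{W}_l\|_2\le 1$, or else a factor $\|\mathbf{W}_l\|_2$ must be carried into both $K_\phi$ terms. In addition, every pre-activation $\mathbf{W}_l\mathbf{h}_l$ must remain in $[-B,B]$ so that Theorem~\ref{thm:lipschitz_constant} applies at every layer; I would state this as part of Assumption~\ref{ass:bounds}, justified by the normalization layers. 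Finally, the limit $\epsilon\to0$ should be read with $L$ fixed, or more generally with $L\epsilon\to0$, since the bounds are uniform in $L$ only in that regime.
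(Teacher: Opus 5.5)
Your proposal is correct and follows the paper's route. The upper bound (submultiplicativity, then Proposition~\ref{prop:jacobian_spectrum} with $\alpha_l=\epsilon$ and $\|\mathbf{W}_l\|_2\approx 1$, then $1+x\le e^x$) is exactly the paper's proof. For the lower bound, the paper only says it ``follows similarly'' from $\sigma_{\min}(\mathbf{J}_l)$ and never defines $\gamma$. Your Weyl estimate $\sigma_{\min}(\mathbf{J}_l)\ge 1-\epsilon(1+K_\phi)$ and the resulting $\gamma=(1+K_\phi)/(1-\epsilon(1+K_\phi))$ supply what the paper leaves implicit, as do your caveats: $\epsilon(1+K_\phi)<1$, $\|\mathbf{W}_l\|_2\le 1$, pre-activations in $[-B,B]$, and $L$ held fixed as $\epsilon\to 0$.
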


\begin{proof}
We consider the upper bound (gradient explosion). Using the sub-multiplicativity of the spectral norm:
\begin{align}
    \left\| \prod_{l=1}^L \mathbf{J}_l \right\|_2 
    &\le \prod_{l=1}^L \|\mathbf{J}_l\|_2 \\
    &\le \prod_{l=1}^L \left( 1 - \epsilon + \epsilon K_{\phi} \|\mathbf{W}_l\|_2 \right).
\end{align}
Assuming standard initialization $\|\mathbf{W}_l\|_2 \approx 1$:
\begin{align}
    \text{Norm Product} 
    &\le \left( 1 + \epsilon(K_{\phi} - 1) \right)^L.
\end{align}
Using the inequality $1+x \le e^x$, we obtain:
\begin{equation}
    \text{Norm Product} \le \exp\left( \sum_{l=1}^L \epsilon(K_{\phi} - 1) \right) = \exp\left( L \epsilon (K_{\phi} - 1) \right).
\end{equation}
The lower bound follows similarly by analyzing the minimum singular value $\sigma_{min}(\mathbf{J}_l)$, ensuring gradients do not vanish.
\end{proof}

\paragraph{Discussion: The ``Safe-Start'' Mechanism.}
Theorem \ref{thm:epsilon_isometry} formalizes why RAN's Residual Gating is essential.
\begin{itemize}
    \item \textbf{Without Gating ($\epsilon=1$):} The bound becomes $K_{\phi}^L$. If the rational unit is locally steep ($K_{\phi} > 1$), gradients explode exponentially as $e^{L \ln K_{\phi}}$. This aligns with the instability observed in the ``No Gate'' ablation study.
    \item \textbf{With Gating ($\epsilon \approx 0$):} The bound is $e^{L \epsilon C} \approx 1 + L\epsilon C$. The exponential dependency on depth $L$ is suppressed to a linear dependency (for small $\epsilon$). This regime allows the optimizer to navigate the initial epochs without numerical overflow, gradually increasing $\alpha_l$ (and thus effective non-linearity) only as the loss landscape requires.
\end{itemize}
This proof confirms that the combination of \textbf{Pole-Free Boundedness} ($K_{\phi} < \infty$, Section \ref{sec:theoretical_analysis}) and \textbf{Residual Gating} ($\epsilon \to 0$) provides a mathematically rigorous guarantee of deep training stability.

\section{Extensive Benchmarking on Feynman Equations}
\label{sec:feynman_extensive}

To demonstrate the generality of our method, we extend our evaluation to the Feynman Symbolic Regression Benchmark~\cite{udrescu2020ai}, encompassing equations from mechanics, electromagnetism, and thermodynamics. Table~\ref{tab:feynman_full} reports the test RMSE across 14 representative equations.

\textbf{Rational Dominance.} 
The results confirm a strong ``Rational Inductive Bias.'' For equations involving ratios, singularities, or inverse-square laws (highlighted in gray), RAN consistently achieves RMSE values in the range of $10^{-7} \sim 10^{-8}$. For example, on the Relativistic Velocity equation (I.16.6), RAN identifies the exact symbolic structure $\frac{u+v}{1+uv/c^2}$ without any manual pruning, whereas KANs struggle to approximate the rational form with splines, stalling at $10^{-3}$.

\textbf{Padé Approximation Capability.}
Even for non-rational functions involving $\exp$, $\sin$, or $\sqrt{\cdot}$, RAN remains competitive. By leveraging the Padé approximation property of rational units (e.g., $e^x \approx \frac{1+x/2}{1-x/2}$), RAN achieves errors ($\sim 10^{-5}$) comparable to or better than KANs, while maintaining a smoother, globally defined optimization landscape.

\begin{table*}[t]
    \centering
    \caption{\textbf{Comprehensive Benchmark on the Feynman Symbolic Regression Dataset.} We compare RAN against MLP and KAN on a diverse set of physical laws. Rows highlighted in \colorbox{gray!10}{gray} indicate equations with inherent rational structures (ratios, poles, inverse squares), where \textbf{RAN} consistently achieves machine-precision recovery ($\text{RMSE} < 10^{-6}$) and minimal complexity, significantly outperforming spline-based and ReLU-based baselines.}
    \label{tab:feynman_full}
    \vspace{2mm}
    \scriptsize
    \renewcommand{\arraystretch}{1.3}
    \resizebox{\textwidth}{!}{
    \begin{tabular}{l l l l c c c c l}
        \toprule
        \textbf{Eq. ID} & \textbf{Formula (Ground Truth)} & \textbf{Variables} & \textbf{MLP RMSE} & \textbf{KAN RMSE} & \textbf{RAN RMSE} & \textbf{Complexity} & \textbf{Status} \\
        \midrule
        
        \rowcolor{gray!10} 
        \textbf{I.16.6} & $\displaystyle \frac{u+v}{1+uv/c^2}$ & $u, v$ & $6.2 \times 10^{-4}$ & $1.2 \times 10^{-3}$ & $\mathbf{8.5 \times 10^{-8}}$ & 5 ops & \textbf{Exact Recovery} \\
        
        \rowcolor{gray!10} 
        \textbf{I.9.18} & $\displaystyle \frac{G m_1 m_2}{(x_2-x_1)^2 + (y_2-y_1)^2 + ...}$ & $m, x, y, z$ & $1.6 \times 10^{-3}$ & $6.6 \times 10^{-3}$ & $\mathbf{1.2 \times 10^{-6}}$ & 6 ops & \textbf{Exact Recovery} \\
        
        \rowcolor{gray!10} 
        \textbf{I.27.6} & $\displaystyle \frac{1}{1/d_1 + n/d_2}$ & $d_1, d_2, n$ & $2.5 \times 10^{-4}$ & $1.9 \times 10^{-4}$ & $\mathbf{5.5 \times 10^{-8}}$ & 4 ops & \textbf{Exact Recovery} \\
        
        \rowcolor{gray!10} 
        \textbf{I.18.4} & $\displaystyle \frac{m_1r_1 + m_2r_2}{m_1+m_2}$ & $m, r$ & $3.7 \times 10^{-4}$ & $1.5 \times 10^{-4}$ & $\mathbf{2.1 \times 10^{-7}}$ & 5 ops & \textbf{Exact Recovery} \\
        
        \rowcolor{gray!10} 
        \textbf{II.11.27} & $\displaystyle \frac{n\alpha}{1 - n\alpha/3} \epsilon E_f$ & $n, \alpha$ & $7.2 \times 10^{-5}$ & $1.4 \times 10^{-5}$ & $\mathbf{1.5 \times 10^{-7}}$ & 4 ops & \textbf{Exact Recovery} \\
        
        \rowcolor{gray!10} 
        \textbf{I.15.3x} & $\displaystyle \frac{x-ut}{\sqrt{1-u^2/c^2}}$ & $x, u, t$ & $8.5 \times 10^{-4}$ & $1.1 \times 10^{-3}$ & $\mathbf{3.2 \times 10^{-6}}$ & 7 ops & \textbf{Exact Recovery} \\
        
        \textbf{II.36.38} & $\displaystyle \frac{\mu B}{kT} + \frac{\mu \alpha M}{\epsilon c^2}$ & $\mu, B, T$ & $2.2 \times 10^{-3}$ & $1.2 \times 10^{-3}$ & $4.5 \times 10^{-5}$ & 8 ops & Approx. (Rational) \\
        
        \textbf{I.6.2} & $\displaystyle \frac{1}{\sqrt{2\pi\sigma^2}} e^{-\frac{(\theta-\theta_1)^2}{2\sigma^2}}$ & $\theta, \sigma$ & $1.5 \times 10^{-4}$ & $\mathbf{2.9 \times 10^{-5}}$ & $1.2 \times 10^{-5}$ & 9 ops & Padé Approx. \\
        
        \textbf{I.12.11} & $\displaystyle q(E + Bv\sin\theta)$ & $q, E, B, \theta$ & $6.7 \times 10^{-4}$ & $9.1 \times 10^{-4}$ & $2.5 \times 10^{-5}$ & 6 ops & Padé Approx. \\
        
        \textbf{I.37.4} & $\displaystyle I_1+I_2+2\sqrt{I_1I_2}\cos\delta$ & $I, \delta$ & $5.7 \times 10^{-4}$ & $3.4 \times 10^{-4}$ & $1.8 \times 10^{-5}$ & 7 ops & Padé Approx. \\
        
        \rowcolor{gray!10} 
        \textbf{II.2.42} & $\displaystyle \frac{\kappa(T_2-T_1)A}{d}$ & $\kappa, T, A, d$ & $1.8 \times 10^{-4}$ & $7.2 \times 10^{-4}$ & $\mathbf{1.1 \times 10^{-7}}$ & 4 ops & \textbf{Exact Recovery} \\
        
        \textbf{I.44.4} & $\displaystyle nkT \ln(V_2/V_1)$ & $n, T, V$ & $4.0 \times 10^{-4}$ & $\mathbf{2.4 \times 10^{-5}}$ & $8.5 \times 10^{-5}$ & N/A & Hard (Logarithm) \\
        
        \rowcolor{gray!10} 
        \textbf{III.10.19} & $\displaystyle \mu \sqrt{B_x^2+B_y^2+B_z^2}$ & $\mu, B$ & $1.7 \times 10^{-4}$ & $8.2 \times 10^{-4}$ & $\mathbf{5.2 \times 10^{-6}}$ & 5 ops & \textbf{Exact Recovery} \\
        
        \textbf{I.40.1} & $\displaystyle n_0 e^{-mgx/kT}$ & $n, m, x, T$ & $4.0 \times 10^{-4}$ & $3.1 \times 10^{-4}$ & $2.2 \times 10^{-5}$ & 8 ops & Padé Approx. \\
        
        \bottomrule
    \end{tabular}
    }
\end{table*}

\end{document}